\title{Observational Auditing of Label Privacy}
\author[1, *]{Iden Kalemaj}
\author[1, *]{Luca Melis}
\author[1]{Maxime Boucher}
\author[1]{Ilya Mironov}
\author[2]{Saeed Mahloujifar}
\affiliation[1]{Meta}
\affiliation[2]{FAIR at Meta}
\abstract{Differential privacy (DP) auditing is essential for evaluating privacy guarantees in machine learning systems. Existing auditing methods, however, pose a significant challenge for large-scale systems since they require modifying the training dataset---for instance, by injecting out-of-distribution canaries or removing samples from training. Such interventions on the training data pipeline are resource-intensive and involve considerable engineering overhead.  We introduce a novel observational auditing framework that leverages the inherent randomness of data distributions, enabling privacy evaluation without altering the original dataset. Our approach extends privacy auditing beyond traditional membership inference to protected attributes, with labels as a special case, addressing a key gap in existing techniques. We provide theoretical foundations for our method and perform experiments on Criteo and CIFAR-10 datasets that demonstrate its effectiveness in auditing label privacy guarantees. This work opens new avenues for practical privacy auditing in large-scale production environments. }
\date{\today}
\def\eqref#1{equation~\ref{#1}}
\def\1{\bm{1}}
\newcommand{\train}{D}
\newcommand{\game}{\ensuremath{\mathsf{Game}}}
\newcommand{\evaluate}{\ensuremath{\mathsf{Evaluate}}}
\newcommand{\gdata}{\ensuremath{G_\mathsf{data}}}
\newcommand{\ghint}{\ensuremath{G_\mathsf{hint}}}
\def\eps{{\epsilon}}
\DeclareMathAlphabet{\mathsfit}{\encodingdefault}{\sfdefault}{m}{sl}
\SetMathAlphabet{\mathsfit}{bold}{\encodingdefault}{\sfdefault}{bx}{n}
\newcommand{\defeq}{\mathrel{\triangleq}}
\newcommand{\set}[1]{\{#1\}}
\newcounter{thm}
\newtheorem{theorem}[thm]{Theorem}
\newtheorem{lemma}[thm]{Lemma}
\newtheorem{proposition}[thm]{Proposition}
\newtheorem{definition}[thm]{Definition}
\DeclareMathOperator*{\E}{E}
\newcommand{\SIM}{\mathrm{Sim}}
\newcommand{\datadist}{\cD}
\NewDocumentCommand{\simdp}{O{$(\epsilon,\delta)$}}{%
  #1-SIM-DP\xspace%
}
\NewDocumentCommand{\simeadp}{O{$(\epsilon,\delta)$}}{%
  #1-SIM$_{\exists\forall}$-DP\xspace%
}
\NewDocumentCommand{\fDPi}{O{$(\epsilon,\delta)$}}{%
  #1-$\mathrm{DP}^\Psi_i$~%
}
\NewDocumentCommand{\fDPr}{O{$(\epsilon,\delta)$}}{%
  #1-$\mathrm{DP}^\Psi_r$~%
}
\NewDocumentCommand{\DPr}{O{$(\epsilon,\delta)$}}{%
  #1-$\mathrm{DP}_r$~%
}
\NewDocumentCommand{\DPi}{O{$(\epsilon,\delta)$}}{%
  #1-$\mathrm{DP}_i$~%
}
\newcommand{\definecalletter}[1]{%
  \expandafter\newcommand\csname c#1\endcsname{\mathcal{#1}}%
}
\forcsvlist{\definecalletter}{A,B,C,D,E,F,G,H,I,J,K,L,M,N,O,P,Q,R,S,T,U,V,W,X,Y,Z}
\DeclarePairedDelimiterX{\infdivx}[2]{(}{)}{%
  #1\;\delimsize\|\;#2%
}
\newcommand{\rvar}[1]{\mathbf{#1}}
\newcommand{\andt}{\text{~and~} }
\newcommand{\numberthis}{\addtocounter{equation}{1}\tag{\theequation}}
\begin{document}

\maketitle
\section{Introduction} 
Differential privacy (DP) auditing has become an important tool for evaluating privacy guarantees in machine learning systems. Recent advances in auditing methods that require only a single run have made it feasible to evaluate privacy for large-scale models without prohibitive computational costs~\citep{steinke2024privacy,mahloujifar2025auditing}. However, most existing auditing approaches still require modifying the training dataset by injecting known entropy or canary data, which limits their applicability in industry-scale environments, where modifications to the training data pipeline require significant engineering overhead. 

In this work, we propose a novel auditing methodology that eliminates the need for dataset modification. Our approach enables privacy evaluation using the natural nondeterminism present in the data distribution itself. We formalize and empirically validate this methodology in the setting of auditing Label DP, generalizable to privacy guarantees for any protected attribute. This capability addresses a significant gap in current auditing techniques, which have primarily focused on membership inference attacks~\citep{shokri2017membership,carlini2022membership} rather than attribute inference. In particular, existing methods for auditing Label DP either require adding out-of-distribution canaries to the training set~\citep{meta-labeldp} or apply only to a limited set of mechanisms~\citep{fekete2024auditing}.

Observational privacy auditing cannot be done unconditionally, without making certain assumptions about the underlying distribution~\citep[Chapter 3]{hernan2020causal}. Unlike anecdotal instances of privacy violations~\citep{barbaro2006aol,narayanan2008-netflix,carlini2021extracting}, auditing seeks to provide statistically valid measurements of memorization. In other words, the objective of privacy auditing is to establish \emph{causality}---demonstrating that a model behaves in a certain way \emph{because} it was trained on specific data. Traditionally, most rigorous membership inference attacks establish and measure causal effects through randomized control trials (RCTs), which require interference with the training data~\citep{MIA_cannot_prove25}. By framing auditing as a security game between two adversarial parties, our approach eliminates the need for training-time intervention.

Our key assumption is the availability of a distribution that approximates the ground truth data distribution. Concretely, the Label DP auditing mechanism relies on access to a proxy label-generating distribution. The proxy does not need to match the ground truth distribution, provided the adversary cannot distinguish between them (with reasonable computational resources). Under this assumption, the counterfactual examples generated by the challenger can be used to evaluate the attack on the model's claimed Label DP guarantees \emph{without training data intervention}. The attack can be made practical by using a model other than the target model as the proxy distribution. Further, in an incremental learning setting, earlier model checkpoints can be used as the proxy distribution, thus requiring no additional model training and minimal engineering overhead.


We demonstrate that our observational auditing framework provides accurate privacy bounds that match those obtained by interventional methods. Through a series of cryptography-inspired games we establish the theoretical foundations for auditing privacy without training data manipulation. By lowering the complexity of privacy auditing, our approach enables its application in a wider variety of contexts.

\noindent\textbf{Our approach and key contributions.} In the rest of this section we focus on the high-level intuition and concepts, deferring the formal treatment for later. The standard definition of differential privacy is formulated as a bound on the stability in distribution of a randomized algorithm in response to small changes in its input, such as inclusion or exclusion of a single user's data. Consequently, the auditing of differentially private implementations is typically presented as empirically testing these bounds by detecting changes in the output distribution on close inputs.

Definitions equivalent or closely related to differential privacy can be framed using the notion of a \emph{simulator} whose role is to ``create the illusion of stability'' while protecting individual records in its input~\citep{CDP09, mahloujifar2025machine}. In this approach, the privacy definition is operationalized as a game played between the simulator (``the defender'') and the adversary (``the attacker''), whose probability of success under the rules of this game is constrained by the privacy parameters. 

Our first contribution is incorporating the simulator \emph{implementation} into the auditing procedure. Simulation-based definitions are standard (indeed foundational) in theoretical cryptography~\citep{GM82, GMR85}. The key distinction is that the cryptographic simulator is always a thought experiment, present in the proof but never implemented in practice (except for didactic purposes). In our approach, the auditor actually tests the simulator's performance vis-\`a-vis the distinguisher in order to derive  privacy loss estimates.

The main advantage of simulation-based DP definitions is that they are inherently compatible with observational auditing. Whereas the counterfactuals of the standard DP formulation are produced by the same process as the real-world execution, the simulator can be implemented quite differently. The other advantage of simulation-based DP, as used in \citet{mahloujifar2025machine}, is the extension of the add/remove DP framework (rather than replacement) to the privacy of partial records. 

Auditing simulation-based differential privacy (DP) differs significantly from the standard approach. Instead of evaluating privacy by having a (fixed, domain-agnostic) challenger evaluate the attacker's success in breaching privacy guarantees, the process involves two competing algorithms: the simulator and the attacker. The simulator’s goal is to generate outputs that closely resemble those of the real mechanism, but without access to certain masked data, thereby confusing the attacker. In this framework, the attacker may succeed for two main reasons: (1) the audited mechanism genuinely leaks private data, or (2) the simulator fails to effectively mimic the real mechanism. Consequently, the audit may underestimate privacy loss if the attacker is not sufficiently strong, or overestimate it if the simulator’s outputs allow the attacker to easily distinguish between simulated and real data.

These trade-offs are not unique to our approach. In most cases, MIAs implicitly assume that the challenger can sample both members and non-members of the training set. Unless this capability is established before training the target model, typically through some form of intervention, the challenger’s options are limited, and none are airtight~\citep{MIA_cannot_prove25}. By explicitly modeling the simulator as a participant in the privacy game, we allow the simulator's output to diverge from the true data distribution and can quantify the resulting slack in auditing guarantees.

A setting where this approach is especially effective is in auditing Label DP, which evaluates the extent to which classifiers memorize sensitive labels. In this context, the simulator’s task is reduced to generating labels that closely match the true distribution. This task is not only fundamentally simpler than synthesizing complete data samples, but it also aligns directly with the objective of the target model. Thus, the architecture of the target model can be readily adapted for use by the simulator.

Our theoretical analysis builds upon the one-run auditing framework of~\citet{steinke2024privacy} and~\citet{mahloujifar2025auditing}. The novel part of the argument incorporates an additional parameter $\tau$ that controls the fidelity of the simulator into tail bounds on the adversary's advantage as a function of privacy parameters. We demonstrate that the audit remains accurate as long as we have a high quality estimator of the true data distribution.

To summarize, in this work we:
\begin{itemize}
    \item Introduce a privacy auditing framework that does not interfere with model training. Instead, it runs a post-training game between a simulator that produces synthetic records (or record attributes) and an attacker that distinguishes between training and synthetic records. 
    
    \item Generalize differential privacy auditing guarantees from \cite{mahloujifar2025auditing} to account for the gap between the simulator and the true data distribution. 

    \item Instantiate our framework for the case of auditing Label DP, giving an easy to implement attack in production settings, where labels are synthesized using earlier model checkpoints. We evaluate this attack empirically and show that it captures similar levels of memorization as interventional MIAs.
\end{itemize}

\section{Background and prior work}\label{sec:prior_art}

Differential Privacy (DP) introduced by~\citet{DMNS06} is a leading framework for providing rigorous privacy guarantees in statistical data analysis and machine learning. In its standard formulation, DP bounds the impact that any single individual's record has on the outcome of a computation by constraining how much the distribution of outputs may differ between neighboring datasets. The definition's strong theoretical guarantees, resilience to arbitrary auxiliary data, and compositional properties have driven its adoption in academic research and industry deployments~\citep{Fioretto2025}.

The general DP framework can be adapted to settings where only certain parts of a dataset are considered private. This paper focuses on Label DP, which has emerged as an important objective for privacy preserving machine learning, particularly in the domain of recommendation systems~\citep{chaudhuri2011-labeldp,google-labeldp,meta-labeldp, wu2022does}. The following factors motivate Label DP as a uniquely valuable privacy concept:
\begin{itemize}
    \item The label---representing the user's choice, expressed preference, or the outcome of an action---may be the only sensitive part of the record, with the rest being publicly available, static, non-sensitive data.
    
    \item In machine learning,  labels are particularly vulnerable to memorization compared to other attributes since they most directly influence the loss function.
    
    \item In settings with mixed public/private features, instead of applying privacy-preserving techniques to sensitive features, one may exclude them from the model, potentially sacrificing some accuracy.  In supervised learning, however, labels are indispensable---there is no analogous alternative to omitting sensitive features. 
    In further separation, once training completes, labels can be safely discarded, whereas features must be available for inference.
\end{itemize}
 
Complementing the strong worst-case guarantees of DP that bound the privacy loss from above on all inputs, privacy auditing empirically measures the privacy loss on concrete instances, providing a lower bound on DP's numerical parameters. Privacy auditing can be used for finding bugs in claimed implementations of DP algorithms~\citep{ding_etal-detecting_violations2018}, advancing understanding of complex DP mechanisms~\citep{meta-labeldp,nasr_etal_tight_auditing23,nasr_etal_last_iterate25}, or guardrailing models in a production environment~\citep{snap2025}.

Privacy auditing consists of two components: a privacy game between the challenger and the attacker, and an auditing analysis that translates the attacker's success into lower bounds on the $(\eps, \delta)$-DP guarantee (or other forms of DP). The privacy game is characterized by the capabilities and resources of the parties, and the attacker's goals, such as reconstruction, membership or attribute inference. Auditing analyses, typically used for membership inference, can be applied to any stochastic privacy game \citep{swanberg2025}. We show theoretical results for our label inference attack building on \cite{steinke2024privacy} and \cite{mahloujifar2025auditing}.

Membership inference attacks (MIAs)---where an adversary uses model access and knowledge of the data distribution to determine whether a sample was part of training---have received significant attention in the literature~\citep{shokri2017membership, yeom2018privacy, salem2018mlleaks, 
sablayrolles2019white, MelisSCS19, nasr2019comprehensive, leino2020stolen, carlini2022membership, ye2022enhanced, zarifzadeh2024low,bertran2024scalable}. This attack category directly maps to the differential privacy guarantee where two neighboring datasets differ in the presence of one training sample. 

A less studied category is attribute inference attacks \citep{yeom2018privacy} where the adversary reconstructs a protected attribute given access to a partial record, of which label inference attacks are a special case \citep{meta-labeldp, fekete2024auditing}. A difficulty and common pitfall with such attacks is to properly account for the adversary's baseline success, achieved by exploiting knowledge of the data distribution and correlations between the public and protected attributes \citep{jayaraman2022are}. 
The label inference attack of \citet{meta-labeldp} uses canaries with random binary labels, which sets the adversary's baseline accuracy to~$0.5$ and allows Label DP to be audited via standard MIA analyses.  

Most auditing methods in the literature are ``interventional,'' as their privacy game involves manipulating the training dataset: MIAs require excluding a subset of the data from training \citep{zanella2023bayesian,steinke2024privacy}, whereas \citet{meta-labeldp} modifies the training labels. Training-data interventions severely restrict the applications of these auditing methods and may degrade model performance if too many samples are withheld or out-of-distribution canaries are injected. Instead, our label inference attack can run entirely post-training. MIAs can be stated as observational privacy games if the challenger is able to sample fresh samples from the distribution~\citep{ye2022enhanced}. 
However, obtaining new samples from the distribution (or from its close approximation), without affecting the training pipeline, remains an extremely challenging open problem~\citep{maini2024llm,duan2024do,meeus2025,blind_baselines25}. 

\cite{kazmi2024panoramia} propose an observational MIA that uses generative models such as generative adversarial networks (GANs) to obtain fresh samples. Differently from \cite{kazmi2024panoramia}, we propose a more general observational auditing framework (with observational MIA and LIA as special cases), and provide theoretical lower bounds on the privacy guarantees for these general attacks. A challenge with using synthetic samples is to account for how the shift between the generative model and the true distribution boosts the success of the attacker. Counterfactual labels are much easier to generate than entire samples (models are trained to do precisely this), and we can assume that an adversary with limited computation resources cannot distinguish between a label from the distribution and a label predicted from the best model the adversary can train.

The closest observational label inference attack to ours is the recent work of \citet{fekete2024auditing}, which measures the label reconstruction advantage of the adversary with and without access to the model. This metric is not translated into a lower bound on $\eps$ in the differential privacy guarantee; in fact, such translation would be difficult because the adversary has a different baseline success (prior) for each sample. Additionally, this approach requires estimating probabilities of the mechanism's output given a particular label,  limiting its applicability to simple mechanisms like randomized response and random label aggregation.    
In contrast, our label inference attack is applicable to all mechanisms. It can audit label privacy in a statistically valid manner because it sets up a game where the baseline accuracy of the adversary is $0.5$ for all samples.

\section{Preliminaries}
We use calligraphic letters such as $\cX, \cY, \cD$ to denote sets and distributions. Capital letters such as~$X, Y, \train$ denote random variables and datasets, and lowercase letters denote their values. $\cX^n$ is the set of all datasets of size $n$ with elements from $\cX$, whereas $\cX^*$ is the set of all finite-size data sets with elements from $\cX$. 

In this work, we audit the simulation-based definition of DP, introduced by \cite{mahloujifar2025machine}. It generalizes the traditional add/remove (or ``leave-one-out'', or ``zero-out'') notion of DP to support a privacy unit that is a subset of the sample's attributes. The definition compares the distribution of a mechanism $M$ on dataset $D$ with that of a simulator that emulates the output of $M$ on $D$ without seeing the protected attributes of a record. This privacy definition fits nicely with our auditing framework, which is based on a proxy distribution (or simulator) that emulates training records. See Appendix~\ref{app:definitions} for further discussion of this definition.

\begin{definition}[Simulation-based privacy for protected attributes \citep{mahloujifar2025machine}]\label{def:sim-dp}
    Let records $(x,y)\in \cX \times \cY$ be such that $x$ is public or non-sensitive and thus need not be protected. We say that a randomized mechanism $M\colon (\cX\times \cY)^* \to \cZ$ is \simdp with respect to a simulator $\SIM \colon (\cX \times \cY)^*\times \cX \to \cZ$ if for all datasets $D\in (\cX \times \cY)^*$, $(x,y)\in D$, and $D'=D \setminus \{(x,y)\}$, it holds
    \begin{align}\label{eq:eps_dp}
            M(D) \approx_{\eps, \delta} \SIM(D', x).
    \end{align}

    For the more advanced notion of $f$-DP \citep{dong2019gaussian}, a mechanism is \simdp[f] with respect to $\SIM$ if
    $\label{eq:f_dp}
        M(D) \approx_f \SIM(D', x).
    $
    We also say $M$ is \simdp (resp.~\simdp[f]) if there exists a simulator $\SIM$ for which (\ref{eq:eps_dp}) (resp.~(\ref{eq:f_dp})) holds. 
\end{definition}

Our auditing guarantees are stated for a family of  generic simulators that treat $M$ as a black-box. Such a simulator imputes the missing part $y$ of the record based on the public part $x$ and runs the original mechanism.
 
\begin{definition}[Imputation-based simulator]\label{def:imputation-sim}
For a mechanism $M\colon (\cX\times\cY)^* \to {\cZ}$, a data distribution~$\datadist$ supported on $\cX\times \cY$, a dataset $\train \in (\cX\times\cY)^*$ and public part of a record $x \in \mathcal{X}$, the \emph{imputation-based simulator} $\SIM_{M, \datadist}$ is defined as
\begin{align*}
    \SIM_{M, \datadist}(\train,x ) \defeq M(\train \cup \{(x, y'\}), \quad{\text{where }} y' \sim \datadist \mid x. 
\end{align*} 
\end{definition}

Finally, we define empirical privacy auditing. A similar definition holds for \simdp.
\begin{definition}[Auditing simulation-based DP]\label{def:audit}
     An audit procedure takes the description of a mechanism $M$, a trade-off function $f$, a simulator $\SIM$ and decides whether the mechanism satisfies \simdp[f] with respect to $\SIM$. We define it as a two-step process. 

    \begin{itemize}
        \item $\game\colon \cM\times \cS\to \cO$. The auditor runs a potentially randomized experiment/game using the description of mechanism $M\in \cM$ and the simulator $\SIM$. The auditor receives the game output  $o\in \cO$.
       \item $\evaluate\colon \cO\times \cF\to \set{0,1}$. The output is $0$ if the auditor rejects the hypothesis that $M$ satisfies \simdp[f] based on evidence $o$, and $1$ otherwise.
    \end{itemize}
\end{definition}

\section{Observational versus interventional privacy games} \label{sec:obs_vs_interv}

An observational privacy game considers the training dataset as a given. In contrast, an interventional privacy game interferes with the training data pipeline and the eventual dataset used for training. In this section, we formalize our observational  privacy auditing framework. To that end, we introduce a generic attack game, generalizing \citet{swanberg2025}, as Algorithm~\ref{alg:generic_attack}. 

The privacy game occurs between two parties: a challenger and an adversary. A key difference from  \citet{swanberg2025} is that we split the challenger algorithm into two stages: sampling the training data (\gdata) and sampling the additional outputs provided to the adversary (\ghint). With this,  we can separate observational games from interventional ones. Further, we distinguish between the training dataset $\train$ and additional game artifacts $S$, which are used by the challenger to set up a stochastic game (e.g., sampling random bits). 

\begin{algorithm}
\caption{Generic Attack Game (adapted from \citet{swanberg2025})}
\label{alg:generic_attack}
\begin{algorithmic}[1]
\INPUT Mechanism $M(\cdot)$, data distribution $\datadist$,  distribution for game artifacts $\cD_{\mathsf{prior}}$, adversary $A$
\STATE Sample training dataset $\train = (x_1, \dots, x_m)$ where $x_i \sim \datadist$.
\STATE Sample game artifacts $S \sim \cD_{\mathsf{prior}}$.  
\STATE Let $\overline{\train}\leftarrow \gdata(\train, S)$. \COMMENT{\gdata\ determines the training dataset for $M$}
\STATE Let $o_1 \leftarrow  M(\overline{\train})$. 
\STATE Let $o_2 \leftarrow  \ghint(o_1, \train, S)$. \COMMENT{\ghint\ determines additional input provided to the attacker $A$}  
\STATE Run attack $A(o_1, o_2)$ with access to $\datadist$, $\datadist_{\mathtt{prior}}, M$. 
\STATE Measure adversary success with loss metric $\mathcal{L}( A(o_1, o_2), S)$.
\end{algorithmic}
\end{algorithm}

The algorithm \gdata\ determines the training dataset for $M$, obtained from a potential modification of the fixed training set $\train$. For instance, in the one-run MIA \citep{steinke2024privacy} the artifacts are $S =(S_i \sim \mathsf{Bernoulli}(0.5) \colon i \in [m])$, where $m$ is the number of canaries. The training dataset $\overline{\train}$ is obtained from $\train$ by including samples $x_i \in \train$ where $S_i = 1$ and excluding samples where $S_i = 0$. 

The role of \ghint\  is to collect additional information the challenger provides to the adversary, based on game artifacts, training data, and the output of the trained model $M(\overline{\train})$. In the one-run MIA, $o_2$  is the vector of targets $x_1, \dots, x_m$ from the training set $\train$.

\begin{definition} [Observational games]
We call a privacy game, as outlined in Algorithm~\ref{alg:generic_attack},  observational if $\overline{\train} = \train$ (and as a result $o_1 = M(\train)$). That is, $M$ is trained on the original $\train$, and the observations of the adversary consist of (1) output of $M(\train)$ and (2) additional postprocessing of $\train$, $M(\train)$ according to the game artifacts $S$.
\end{definition}

\textbf{Observational/interventional MIA.} We have described how the (interventional) one-run MIA \citep{steinke2024privacy} can be framed as Algorithm~\ref{alg:generic_attack}. We now present an observational one-run MIA (following \citet{ye2022enhanced}). 

Given data distribution $\datadist$, sample a sequence $S$ of game artifacts $S_i = (b_i, x_i') \sim \{0, 1\} \times \datadist$ for $i \in [m]$. The bits $b_i\sim \mathsf{Bernoulli}(0.5)$, whereas $x_i$ is a fresh sample from the distribution (which is highly unlikely to be in~$\train)$. Then, train model $M$ on $\overline{\train} = \train$. Let $o_2[i] = x_i$ if $b_i = 0$ and $o_2[i] = x_i'$ if $b_i = 1$ for $i \in [m]$. That is, the adversary receives either a training sample~$x_i$ or a sample~$x_i'$ from the distribution with probability~$0.5$. The adversary has to guess~$b_i$, i.e., which of the samples it is seeing. This game is observational because the training dataset for $M$ remains unchanged.

From the adversary's perspective, the observational one-run MIA  has the same distribution as the interventional MIA. For the challenger, the difference matters as the observational game does not alter the training pipeline. In this game, the source of counterfactual samples $x_i'$ can be a distribution that approximates $\datadist$ sufficiently well, as is the case with the attack of \cite{kazmi2024panoramia} who use generative models to sample $x_i'$.

\section{Observational attribute inference}
In this section, we describe our observational attribute inference attack. It allows privacy measurement with respect to any set of protected attributes, which can be the entire record (as in the observational MIA, Section~\ref{sec:obs_vs_interv}) or just the label, for Label DP auditing. We provide theoretical results for obtaining empirical privacy lower bounds from our game. In particular, our analysis provides lower bounds on simulation-based DP in the add/remove privacy model (see Appendix~\ref{app:definitions}). 



\begin{algorithm}
\caption{Observational attribute inference in one run}\label{alg:aia_1}
\begin{algorithmic}[1]
\INPUT Oracle access to a mechanism $M(\cdot)$, data distribution $\datadist$ and approximate distribution $\datadist'$ supported on~$\cX\times \cY$, attacker $A$

\STATE Let $\train^0=\Big((x_1,y_1^0),\dots,(x_m,y_m^0)\Big)$, where $(x_i, y_i^0) \sim \datadist$ for $i \in [m]$.

\STATE Run mechanism $M$ on $\train^0$.

\STATE Sample game artifacts $\Big((b_1, y_1^1), \dots, (b_m, y_m^1)\Big)$ such that $(b_i, y_i^1) \sim \mathsf{Bernoulli}(0.5)\times \datadist' \mid x_i$.  
\STATE Construct a dataset $\train^b = \Big((x_1,y_1^{b_1}) ,\dots,(x_m, y_m^{b_m})\Big).$
\STATE Run attack $A$ with input $ M(\train^0)$, $\train^b$, and access to $\datadist$, $\datadist'$.  
\STATE Reconstruct a vector of predictions $b' = (b'_1,\dots,b'_m)$ which is supported on $\set{0,1,\bot}^m$.
\STATE Count $c$, the number of correct guesses where $b_i'=b_i$, and $c'$, the total number of guesses where $b_i' \neq \bot$. \COMMENT{$\bot$ indicates abstention from guessing}
\STATE \textbf{return} $(c,c')$.
\end{algorithmic}
\end{algorithm}

Similar to prior auditing papers \citep{mahloujifar2025auditing, steinke2024privacy} the adversary can choose to abstain from guessing on samples where it is least confident, to boost its positive likelihood ratio. The observational game can use the entire dataset as canaries (i.e., $m=n$). 

\textbf{Implementing the attack in practice, i.e., obtaining approximate distributions.} A key aspect in implementing the observational attribute inference attack is to produce the proxy distribution $\datadist'$ from which the counterfactual partial records~$y_i^1$ are sampled. One option is to train an additional model $M'$ to predict the missing attribute(s). For label inference attacks, which are a special case of Algorithm~\ref{alg:aia_1}, we sample counterfactual label $y_i^{i}$ from $\mathsf{Multinoulli}(M'(x_i))$. In an online machine learning system, where the model trains incrementally as more recent data becomes available, one can use a prior model checkpoint as the model $M'$ (and run the attack on the newer data). This eliminates the need for training any additional models, making our proposed label inference attack very lightweight in terms of computational overhead and implementation complexity.

It is harder to operationalize Algorithm 2 for the case of MIA or attribute inference attacks, which require synthesizing entire samples or multiple attributes. In this case, diffusion models or generative adversarial models (GANs) can be used to generate the synthetic records, as in \cite{kazmi2024panoramia}. However, the architecture of the generative model might differ from the architecture of the target model, thus requiring training an additional model. This presents a tradeoff in implementation complexity between training an additional model versus modifying training data pipelines to exclude data. In contrast, the observational label inference attack may use prior model checkpoints and requires no additional training. 



\textbf{Auditing guarantees under distribution shift}. We now establish theoretical auditing guarantees that translate the attacker's accuracy into a lower bound on the privacy parameters.  Our bound depends on the total variation (TV) distance between $\datadist\mid x$ and $\datadist'\mid x$. Intuitively, the larger the distance between the two distributions, the weaker the lower bound we can obtain on the privacy guarantee. 

\begin{theorem}[Auditing $f$-DP with distribution shift]\label{thm:fdp-shift}
Let $M\colon (\cX,\cY)^* \to \mathcal{Z}$ be a mechanism, $\datadist$ the data distribution, $\datadist'$ an approximate distribution, and $\SIM_{M, \datadist'}$ the imputation-based simulator (Definition~\ref{def:imputation-sim}).  Let $C = \sum_{i\in [m]} \mathbf{1}[b_i' = b_i]$ be the total number of correct answers from the one-run observational attribute inference attack (Algorithm~\ref{alg:aia_1}) for an adversary that makes $c'$ guesses. Let $\mathrm{TV}(\datadist|x,\datadist'|x) \leq \tau$ for all $x$ in the dataset~$D$ and define $g\colon[0,1]\to[0,1]$ such that 
\begin{align}\label{eq:tau}
    g(s)=f(\min(1,s+\tau)).
\end{align}
If $M$ is \simdp\ with respect to $\SIM_{M, \datadist'}$ and the auditing Algorithm~\ref{alg:num_audit} returns False on $(c', c, M, g, \gamma)$, then
$\Pr[C\geq c]\leq \gamma.$
\end{theorem}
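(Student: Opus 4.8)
The plan is to reduce the statement to the one-run $f$-DP auditing analysis of \citet{mahloujifar2025auditing}, which, as recalled in Section~\ref{sec:prior_art}, applies to any stochastic privacy game once one checks that each challenge bit is individually ``protected'' by a trade-off function. Here Algorithm~\ref{alg:num_audit} is the numerical procedure from that work: given the guess budget $c'$, the confidence $\gamma$, and a trade-off function (which we instantiate with $g$ from \eqref{eq:tau}), it computes the largest correct-guess count achievable with probability exceeding $\gamma$ in a game whose bits are $g$-protected, and returns \textsf{False} exactly when the observed $c$ does not exceed this threshold. So it suffices to show that, under the hypotheses, the game of Algorithm~\ref{alg:aia_1} is $g$-protected in their per-record sense; the abstention rule and the use of $m=n$ canaries are then inherited verbatim.

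For the per-record reduction, fix $i\in[m]$ and condition on all randomness of the game except $b_i$, the true label $y_i^0$, the proxy label $y_i^1$, and the coins of $M$; then the public parts $x_1,\dots,x_m$ and all other labels are frozen, so $M$ runs on $D'\cup\{(x_i,y_i^0)\}$ for a fixed $D'$, and the part of the adversary's view informative about $b_i$ is the pair $\big(M(D'\cup\{(x_i,y_i^0)\}),\,y_i^{b_i}\big)$. Call its conditional law $P_0$ when $b_i=0$ and $P_1$ when $b_i=1$; explicitly $P_0=\mathrm{Law}\big(M(D'\cup\{(x_i,Y)\}),Y\big)$ with $Y\sim\datadist\mid x_i$, and $P_1=\mathrm{Law}\big(M(D'\cup\{(x_i,Y)\}),Y'\big)$ with $Y\sim\datadist\mid x_i$ and $Y'\sim\datadist'\mid x_i$ independent. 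The goal is to prove that $P_0$ and $P_1$ are $g$-indistinguishable for every such conditioning.

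The core step is to compare the game with a ``fully-$\datadist'$'' idealization in which \emph{every} label (both the trained-on label and the proxy) is drawn from $\datadist'\mid x_i$, with conditional laws $\widetilde P_0,\widetilde P_1$. First, this idealization is clean: conditioning on the revealed label equal to $w$, $\widetilde P_0$ yields $M(D'\cup\{(x_i,w)\})$ while $\widetilde P_1$ yields $\SIM_{M,\datadist'}(D',x_i)$ (since in $\widetilde P_1$ the trained-on label is a fresh $\datadist'\mid x_i$ draw independent of $w$), which are $f$-indistinguishable by the \simdp[f] hypothesis, with the common mixing law $\datadist'\mid x_i$ over $w$ on both sides; since $f$-DP is preserved under mixing against a common distribution, $\widetilde P_0\approx_f\widetilde P_1$. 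Second, because $\mathrm{TV}(\datadist\mid x_i,\datadist'\mid x_i)\le\tau$, one can couple the trained-on label of the real game with that of the idealization so that they agree with probability $\ge 1-\tau$, and on that event the entire per-record view coincides \emph{for either value of $b_i$}; hence $\mathrm{TV}(P_0,\widetilde P_0)\le\tau$ and $\mathrm{TV}(P_1,\widetilde P_1)\le\tau$ via a single coupling. Combining this with the elementary fact that replacing an argument of an $f$-indistinguishable pair by a distribution at total-variation distance $\le\tau$ produces a $g$-indistinguishable pair with $g(s)=f(\min(1,s+\tau))$ yields $P_0\approx_g P_1$, with the $\tau$ of \eqref{eq:tau}.

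Having established per-record $g$-indistinguishability for every conditioning, I would invoke the one-run theorem of \citet{mahloujifar2025auditing} to conclude that $\Pr[C\ge c]\le\gamma$ whenever Algorithm~\ref{alg:num_audit} returns \textsf{False} on $(c',c,M,g,\gamma)$. The step I expect to be the main obstacle is the trade-off bookkeeping in the previous paragraph: $M$ is always trained on a label from the true distribution $\datadist$, whereas the simulator $\SIM_{M,\datadist'}$ it is assumed private against imputes from $\datadist'$, so one must arrange the idealization and the single coupling so that the total-variation slack enters exactly once and on the correct side of the comparison, producing $g$ as in \eqref{eq:tau} rather than accumulating an extra $\tau$ or altering its form. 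One also has to verify the two supporting facts used above---preservation of $f$-DP under mixing over a common distribution, and the total-variation perturbation bound on trade-off functions---and that the per-record hypothesis required by \citet{mahloujifar2025auditing} is implied by $P_0\approx_g P_1$.
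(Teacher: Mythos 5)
Your overall strategy---establish a per-record guarantee with a $\tau$-shifted trade-off function and then reuse the one-run machinery of \citet{mahloujifar2025auditing}---is the same strategy the paper follows, but the step you yourself flag as ``the main obstacle'' is a genuine gap, and as written your argument does not deliver the claimed function $g$. Your hybrid-plus-coupling argument pays the total-variation slack \emph{twice}: once through $\mathrm{TV}(P_0,\widetilde P_0)\le\tau$ (which perturbs the type-I error of a test) and once through $\mathrm{TV}(P_1,\widetilde P_1)\le\tau$ (which perturbs the type-II error). Chaining these with $\widetilde P_0\approx_f\widetilde P_1$ gives, for any rejection rule, $\beta_{P_1}\ge \beta_{\widetilde P_1}-\tau\ge f\bigl(\alpha_{\widetilde P_0}\bigr)-\tau\ge f\bigl(\min(1,\alpha_{P_0}+\tau)\bigr)-\tau$, i.e.\ $g(s)-\tau$ rather than $g(s)=f(\min(1,s+\tau))$. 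The ``single coupling'' observation cannot repair this: the two TV bounds are applied to two different error probabilities of the same test, evaluated under the two distinct worlds $b_i=0$ and $b_i=1$, so there is no common event or probability space across null and alternative through which the slack could be made to enter only once. Consequently you have not justified running Algorithm~\ref{alg:num_audit} with $g$ as in~(\ref{eq:tau}); your argument only supports a strictly weaker (subtractive-$\tau$) bound, which is not the statement of Theorem~\ref{thm:fdp-shift}. (Your two supporting facts---preservation of $f$-indistinguishability under mixing with a common index distribution, via convexity of $f$, and the TV-perturbation bound---are fine; the problem is solely how they compose.)

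The paper avoids this by never proving the symmetric statement $P_0\approx_g P_1$ at all. Its key lemma (Lemma~\ref{lemma:key_lemma}, the analogue of Lemma~A.1 in \citet{mahloujifar2025auditing}) is asymmetric: it sandwiches $\Pr[M(D)\in E \wedge b_1=b_1']$ between $f''_\tau$ and $f'_\tau$ of $\Pr[M(D)\in E]$, where \emph{both} endpoint probabilities live in the real world (mechanism trained on the true label) and the event $E$ is on the mechanism's output. The $f$-guarantee against the imputation-based simulator is applied once, in the middle of the chain, and the bound $\mathrm{TV}(\datadist|x,\datadist'|x)\le\tau$ is paid exactly once when translating the simulator-world probability of the complementary (``wrong guess'') event back into the real-world quantity $q-p$; this is why only a single additive $\tau$ appears inside $f$, yielding $p\le 1-f(q-p+\tau)$. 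The paper then re-runs the full permutation/double-counting recursion (Theorem~\ref{thm:permutation}, Lemma~\ref{lem:audit_alg}, and the monotonicity lemma) with $\bar f(s)=1-f(s+\tau)$, rather than invoking the one-run theorem as a black box; this matters because the precise Lemma-A.1-style per-record statement (tied to events on the mechanism output jointly with guess correctness) is what the recursion consumes, and the $\tau$-shift must be tracked through it to match exactly what Algorithm~\ref{alg:num_audit} computes with $g$. To fix your proof you would need to restructure the per-record step in this asymmetric way (real world on both ends, simulator world visited once), not merely tighten the coupling.
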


\textbf{Accounting for the distribution shift in practice. } Theorem~\ref{thm:fdp-shift} requires that the bound $\tau$ be known to the challenger, which in practice can be hard to estimate. Not accounting for $\tau$ properly leads to an \emph{overestimation} of the privacy leakage. Nonetheless, for the case of Label DP auditing we recommend using Theorem~\ref{thm:fdp-shift} with $\tau=0$ if the challenger is training the best available model for the prediction task. To elaborate,  Theorem~\ref{thm:fdp-shift} can also be stated in terms of the adversary's ability to distinguish between $\datadist$ and $\datadist'$ given its knowledge and resource constraints. The value $\tau$ in Theorem~\ref{thm:fdp-shift} is an upper bound on the adversary's a priori (i.e., before having access to the target model) success probability in distinguishing whether a sample $(x, y^b)$ is from~$\datadist$ or~$\datadist'$. In the case of Label DP auditing, if $M'$ denotes the best classifier on $\datadist$ that the adversary can access, we may reasonably assume that the adversary is unable to distinguish the true conditional label distribution $y \mid x$ from the distribution $\mathsf{Multinoulli}(M'(x))$. 

Fig.~\ref{fig:plots_tau} (Appendix~\ref{sec:rr}) presents empirical impact of $\tau$ on the measured privacy loss on synthetic data. Higher values of $\tau$ correspond to lower measured privacy losses, due to the additive $\tau$ term in Eq.~\ref{eq:tau}. While the the additive dependence can significantly degrade the measured empirical privacy loss, the effect can be mitigated by assuming stronger bounds on the distribution divergence (e.g., using the Kullback-Leibler divergence instead of the TV distance).

\section{Experiments on auditing Label DP}\label{sec:experiments}
We validate our theoretical framework with experiments on two representative datasets: CIFAR-10 \citep{cifar10} for image classification and Criteo \citep{criteo} for tabular data with sensitive labels. For each dataset, we train classifiers using standard  Label DP mechanisms and empirically evaluate the privacy guarantees using our audit procedure. Appendix~\ref{app:additional_experiments} shows similar results for a third large text dataset. 
We also evaluate our Label DP auditing technique for Randomized Response~\citep{warner1965randomized}
Our attack is implemented in the PrivacyGuard library. 
\footnote{\url{https://github.com/facebookresearch/PrivacyGuard}}

\subsection{Label DP learning algorithms} 
The earliest approach to achieving Label DP is the Randomized Response (RR) mechanism~\citep{warner1965randomized}. In RR, each training label is randomly replaced according to a fixed probability distribution before being shared with the learning algorithm. This randomization helps protect the privacy of individual labels.
We briefly review several recent Label DP mechanisms that improve on RR.

\textbf{Label Private One-Stage Training (LP-1ST, \citet{google-labeldp})}
Instead of using a fixed distribution as in RR, LP-1ST samples each training label $y_i$ from a learned prior distribution $P(y\mid X_i)$.
The prior can be estimated by observing the top-$k$ predictions from a pretrained model (either in domain or out-of-domain), restricting RR to the most probable labels.
Alternatively, the training can be split into multiple stages, where an earlier model provides the prior for the next (LP-MST). 

\textbf{Private Aggregation of Teacher Ensembles with FixMatch (PATE-FM, \citet{meta-labeldp})}
PATE-FM combines the FixMatch semi-supervised learning algorithm~\citep{sohn2020fixmatch} with private aggregation.
Multiple teacher models are trained, each using all unlabeled examples and disjoint subsets of the labeled data.
The predictions from these teachers are then aggregated in a differentially private manner using the PATE framework~\citep{papernot2016semi} to train a student model.

\textbf{Additive Laplace with Iterative Bayesian Inference (ALIBI, \citet{meta-labeldp})}
ALIBI achieves Label DP by adding Laplace noise to the one-hot encoded labels~\citep{ghosh2012universally}, making the released labels differentially private. The model is trained on soft labels computed from noisy observations via Bayesian inference.

\subsection{Attack implementation and adversarial strategy}
A key ingredient in implementing our label inference attack is generating the reconstructed label $y^1$ given features $x$ (see Algorithm~\ref{alg:aia_1}). We generate $y^1$ from the predictions of a reference model~$M'$, trained on separate data (but from approximately the same distribution) as the target model $M$. Specifically, $y^1 \sim \mathsf
{Multinoulli}(M'(x))$, where $M'(x)$ are the predictions of $M'$ on $x$ for each class. 

For the Criteo dataset, where data is collected over 28 consecutive days, we train $M'$ on Day 0  data and the target model on Day 1 data. While there may be some distribution shift between Day 0 and Day 1 data, we assume this is small, i.e., ($\tau$ in Theorem~\ref{thm:fdp-shift} is $0$) and that the adversary cannot distinguish between the true labels $y^0$ and the reconstructed labels $y^1$ without access to the model. For CIFAR-10 experiments we randomly split the training data ($n=50$K samples) into two. We train~$M'$ on the first half, and the target model $M$ on the second half. 

We run the attack on $m=200$K canaries for Criteo and $m=10$K canaries for CIFAR-10. One of the benefits of our observational attack is that it can be run with as many canaries as the size of the training set, eliminating a prior tradeoff for interventional attacks, where using more canaries gives tighter confidence intervals but decreases the size of the available training set. To use the same number of canaries as for MIA experiments, the number of canaries is set to the size of the test sets. 

The adversary obtains its guesses by computing per-example scores. Let $x$ be the features and $y^b \in \{y^0, y^1\}$ be the label received by the adversary. The adversary computes a score that correlates with whether $y^b$ is the reconstructed or the training label. The score consists of two components. The first component $s_1(x, y^b)$ is the difference in probabilities that $y^b$ came from the training set versus the reconstructed distribution:
\begin{align*}
 s_1(x, y^b) &\defeq \Pr[y^0 = y^b \mid M(x)] - \Pr[y^1 = y^b \mid M'(x)] \\
 &= M(x)[y^b] - M'(x)[y^b],
\end{align*}
where $M(x)[y^b]$ is the prediction of $M$ on $x$ for class $y^b$. 
Since the adversary’s performance is measured at the tails of the score distribution, the adversary prefers to guess on samples where $\Pr[y^0 \neq y^{1}]$ is high. Thus the second component of the score is defined as
\begin{align*}
    s_2(x, y^b) &\defeq \Pr[y^0 \neq y^{1}]
    = 1 - M'(x)[y^b].
\end{align*}

The final score combines the two components as $s(x, y^b) \defeq s_1(x, y^b) \cdot s_2(x, y^b)^t$ with a hyperparameter $t \geq 0$ that allows for weighting the two components separately. We use $t=2$, as $t >1$ gives tight lower bounds for RR. The adversary guesses on $c'\%$ of the samples with the highest absolute scores.
We sweep $c' \in \{1, 2, \dots, 100\}$ and report the highest $\eps$ achieved at 95\% confidence, averaged over 100 repetitions of the game (resamplings of counterfactual labels). In Appendix ~\ref{app:additional_experiments}, we show how we obtain similar results for a fixed $c'$.

\subsection{CIFAR-10 experiments}

For CIFAR-10, we treat the image classes as sensitive labels. We train standard convolutional neural networks with varying privacy budgets $\eps \in \{1, 10, \infty\}$ (see model accuracy in Table~\ref{tab:cifar10_test_acc}, Appendix~\ref{app:models_accuracy}). We then audit Label DP using our observational game and report results in Table~\ref{tab:label_dp_images}.

\subsection{Criteo experiments}
The Criteo dataset contains user click-through data with demographic information encoded as 13 numerical features and 26 categorical features. A binary label indicates whether the user clicked on the ad. The distribution of the labels is highly imbalanced, with only 3\% of positives (clicks). 
The overall dataset contains over 4 billion click log data points over a period of 24 days. We followed the same setup in~\cite{wu2022does} where 1 million data points are selected for each day. We divide the data into 80\% for training, 4\% for validation, and 16\% for testing. Model performance is evaluated using the log-loss metric on the test set.

We train gradient boosting decision trees with the CatBoost library~\citep{dorogush2018catboost} with varying privacy budgets $\eps \in \{0.1, 1, 2, 4, 8, \infty \}$ and evaluate label privacy using observational privacy auditing (Table~\ref{tab:label_dp_criteo}).
Table~\ref{tab:log_loss_criteo} (Appendix~\ref{app:models_accuracy}) shows model performance under different Label DP algorithms and values $\eps$.


\begin{table}
\centering
\caption{CIFAR-10. Auditing Label DP algorithms under different $\eps$ with $\delta=10^{-5}$.}
\label{tab:label_dp_images}
\resizebox{0.7\textwidth}{!}{%
\begin{tabular}{lccc}
\toprule
Label DP Algorithm & \multicolumn{3}{c}{CIFAR-10} \\
& $\eps = \infty$ & $\eps = 10.0$ & $\eps = 1.0$ \\
\midrule
LP-1ST & 2.13 $\pm$ .22 & 2.02 $\pm$ .33 & 0.43 $\pm$ .05 \\
LP-1ST (out-of-domain prior) & 2.26 $\pm$ .22 & 1.86 $\pm$ .25 & 0.90 $\pm$ .07 \\
PATE-FM & 2.42 $\pm$ .32 & 2.22 $\pm$ .24 & 0.79 $\pm$ .09 \\
ALIBI & 2.53 $\pm$ .33 & 2.18 $\pm$ .27 & 0.67 $\pm$ .07 \\
\bottomrule
\end{tabular}
}
\end{table}

\begin{table}
    \centering
    \caption{Criteo. Auditing Label DP algorithms for different $\eps$ with $\delta=10^{-5}$. As in~\citet{wu2022does}, for LP-1ST (domain prior) at $\eps \in \{0.1, 1, 2\}$ the training process did not produce meaningful outcomes.}
    \label{tab:label_dp_criteo}
    \resizebox{1.0\textwidth}{!}{%
    \begin{tabular}{lcccccc}
        \toprule
        Label DP Algorithm & $\eps = \infty$ & $\eps = 8$ & $\eps = 4$ & $\eps = 2$ & $\eps = 1$ & $\eps = 0.1$ \\
        \midrule
   LP-1ST & 1.37 $\pm$ .14 & 1.29 $\pm$ .12 & 1.22 $\pm$ .17 & 0.59 $\pm$ .05 & 0.34 $\pm$ .02 & 0.06 $\pm$ .01 \\
    LP-1ST (domain prior) & 1.47 $\pm$ .22 & 1.37 $\pm$ .10 & 1.28 $\pm$ .21 & --- & --- & --- \\
    LP-1ST (noise correction) & 1.31 $\pm$ .12 & 1.26 $\pm$ .11 & 1.04 $\pm$ .16 & 0.52 $\pm$ .07 & 0.40 $\pm$ .08 & 0.06 $\pm$ .01 \\
    LP-2ST & 1.52 $\pm$ .14 & 1.46 $\pm$ .11 & 1.24 $\pm$ .15 & 0.75 $\pm$ .05 & 0.61 $\pm$ .07 & 0.06 $\pm$ .01 \\
    PATE & 1.60 $\pm$ .15 & 1.48 $\pm$ .14 & 1.28 $\pm$ .12 & 0.71 $\pm$ .07 & 0.59 $\pm$ .05 & 0.06 $\pm$ .01 \\
    \bottomrule
    \end{tabular}    }
\end{table}

\subsection{Comparison with Existing Methods}
We compare our observational auditing approach against traditional canary-based methods to demonstrate the effectiveness and practicality of our framework. More specifically, we evaluate against the lightweight \emph{difficulty calibration} MIA in~\cite{WatsonGCS22}, where membership scores are adjusted to the difficulty of correctly classifying the target sample. For each canary datapoint, we set the calibrated membership scores as the difference in the loss between the target model and the reference model $M'$. 
Fig.~\ref{fig:comparison_mia} shows how our method achieves similar auditing results when compared to MIA on the CIFAR-10 and Criteo datasets.


\begin{figure}[ht]
    \centering
    \begin{subfigure}[b]{0.47\textwidth}
        \centering\includegraphics[width=\textwidth]{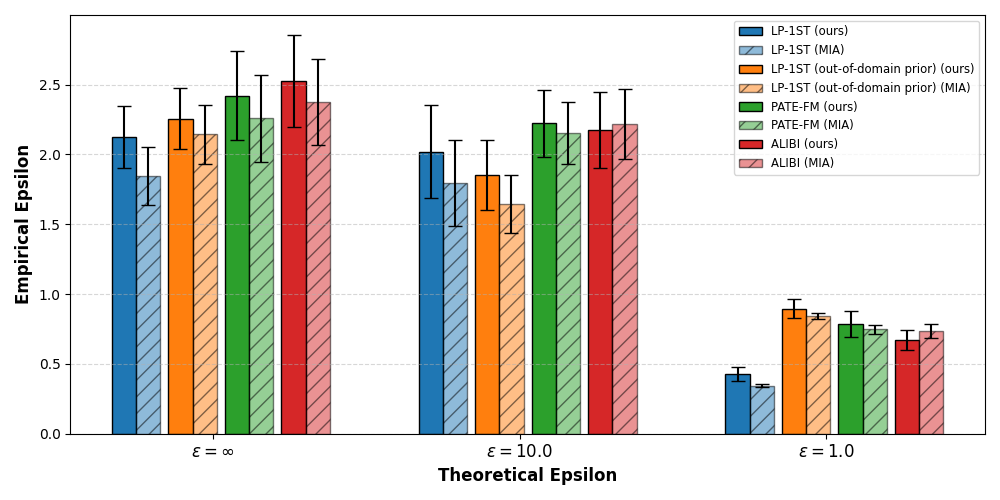}
        \caption{CIFAR-10}
        \label{fig:cifar10_mia}
    \end{subfigure}
    \begin{subfigure}[b]{0.47\textwidth}
        \centering
        \includegraphics[width=\textwidth]{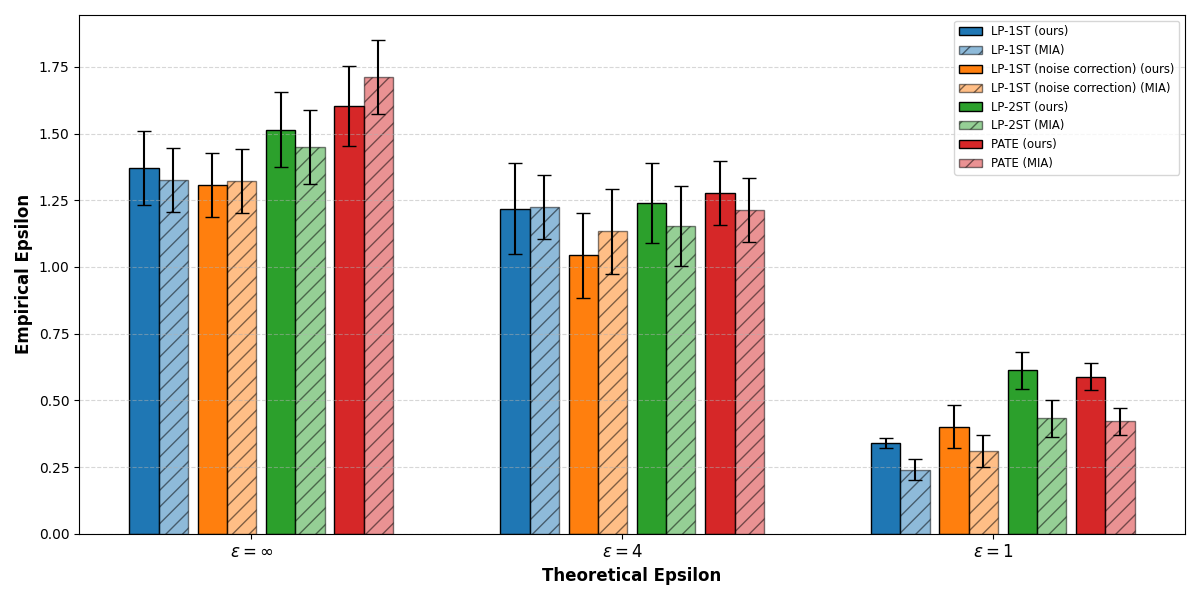}
        \caption{Criteo}
        \label{fig:criteo_mia}
    \end{subfigure}
    \caption{Comparison with MIA for different Label DP Algorithms on CIFAR-10 and Criteo datasets. The error bar represents the standard deviation across 100 different repetitions.}
    \label{fig:comparison_mia}
\end{figure}

We leave to future work a comparison with more computationally intensive methods, such as ~\citet{zarifzadeh2024low}, which require training multiple auxiliary (shadow) models to achieve state-of-the-art attack performances. However, we compare our approach to that of ~\citet{zarifzadeh2024low} for the special case of one single reference model in Appendix~\ref{app:additional_experiments}.

\textbf{Gap between theoretical and empirical epsilon.} Fig.~\ref{fig:comparison_mia} shows a large gap between the theoretical and empirical bounds on $\epsilon$, especially for larger values of $\epsilon$. This gap is due to several factors that have been previously explored in the literature: (1) the attacks are black-box, i.e., they do not have access to intermediate outputs of the training procedure such as gradients~\citep{nasr2021adversary}, (2) the theoretical $\epsilon$ is an upper bound that is not necessarily tight, (3) $f$-DP auditing is not always tight, especially at higher values of $\eps$ \citep{mahloujifar2025auditing}.



\begin{figure}[!h]
    \centering
\includegraphics[width=0.9\linewidth]{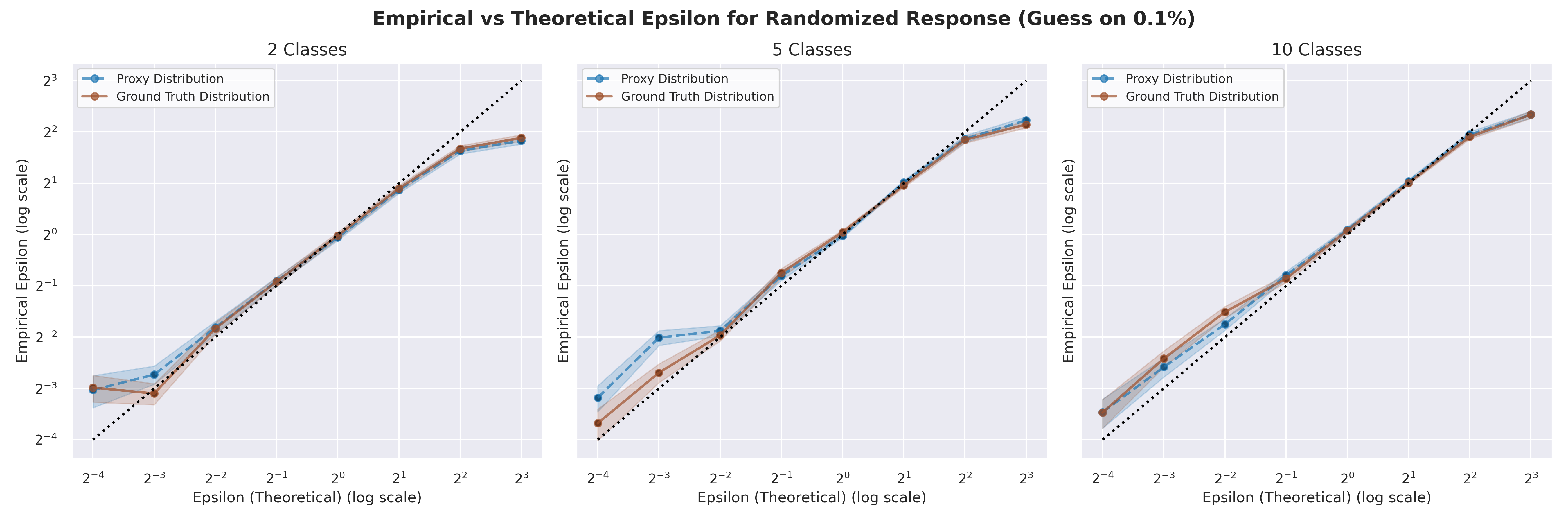}
    \caption{Auditing randomized response when the adversary guesses on 0.1\% of samples. The counterfactual labels are generated either from the ground-truth distribution or sampled according to the output of a logistic regression model.}
    \label{fig:rr_0p1pct}
\end{figure}

\subsection{Synthetic data and auditing randomized response}

We empirically demonstrate the tightness of our Label DP auditing algorithm for Randomized Response using synthetic data. The distribution consists of $k$ balanced classes, each generated from a $5$-dimensional Gaussian with the same covariance but a shifted mean. We generate $n=10^6$ samples and vary $k \in \{2, 5, 10\}$. Counterfactual labels $y^1$ are generated using either the true distribution or the predictions of a logistic regression model. Figure~\ref{fig:rr_0p1pct} shows empirical epsilon lower bounds at $95\%$ confidence when the adversary makes $0.1\%$ non-abstaining guesses. We obtain tight lower bounds for $\eps \in [1, 4]$. At lower epsilons, the audit overestimates privacy loss due to a higher variance induced by a small number of guesses. Using more guesses at lower epsilon fixes the issue (Appendix~\ref{sec:rr}).

\section{Discussion}
In this paper, we establish a framework for auditing privacy without any intervention during the training process. This enables a principled privacy evaluation in settings where the training process is outside the control of the privacy auditor. This may sound counterintuitive as privacy auditing is a form of causal analysis. However, our method can provide provable guarantees on auditing performance under certain assumptions about the data distribution. We envision that our framework will broaden the scope of privacy auditing applications, as it does not require any supervision of the training process and can be conducted by third parties.
\clearpage
\newpage
\bibliographystyle{assets/plainnat}
\bibliography{ref}

\clearpage
\newpage
\beginappendix

\section{Simulation-based differential privacy}\label{app:definitions}

In this section, we motivate and formally define the simulation-based notion of differential privacy. We recall the standard definition of differential privacy, which we are going to extend and adapt to our setting.

\begin{definition}[Differential privacy] A randomized mechanism $M\colon \cX^* \to \cZ$ satisfies $({\epsilon, \delta})$-differential privacy if for all pairs of neighboring datasets $D,D'\subset \cX^*$ and $E \subset \cZ$ it holds:
\[
\Pr[M(D) \in E] \leq e^\epsilon \Pr[M(D') \in E] + \delta.
\]
\end{definition}

The definition depends on the notion of \emph{neighboring datasets}, which is a symmetric binary relation on $\cX^*$ denoted as $D\sim D'$. Choosing the neighboring relationship is an important part of mechanism design and has direct implications on the type of privacy guarantee. See Table~\ref{tab:dp-deltas} for some common definitions of neighboring datasets and the resulting DP notions. 

Differentially private mechanisms are often applied to datasets containing records with both public and private components. Consider records of the form $(x, y) \in \cX\times\cY$, where $x$ represents the public (non-sensitive) data and $y$ the sensitive attributes that require protection. Differential privacy for this setting can be defined by letting $D\sim D'$ if they differ only in the sensitive attributes of a single record: that is, $D$ and $D'$ are identical except for one record being $(x, y)$ in $D$ and $(x, y')$ in $D'$. This is a generalization of the original definition of~\citet{DMNS06}, which modeled $D$ as an indexed vector. In current terminology, this is the ``replacement'' model of differential privacy: the sensitive portion of a record is \emph{replaced} with a different value. Semantically, this guarantees that an adversary observing $M$'s output cannot distinguish between two possible private values $y$  and $y'$ of a user any better than without $M$, within an $(\epsilon,\delta)$-slack.

The alternative to the replacement notion is add/remove, which stipulates that $M$'s output on inputs with and without the user are $(\epsilon,\delta)$-indistinguishable. In addition to protecting the user's data, this model also hides the user's membership status and the size of the dataset. DP in the add/remove model implies DP in the replacement model (via the two-step hybrid, with looser parameters) but not vice versa. 

We target the add/remove model of differential privacy. For datasets containing both public and private attributes, one way to define this model is by introducing a class of records where the sensitive parts are removed. In this formulation, two datasets $D$ and $D'$ are neighboring if they differ only in the pairs $(x,y)$ and $(x,\bot)$. We propose an equivalent definition that is more explicit, as it introduces the notion of a \emph{simulator}:

\newtheorem*{definition*}{Definition}
\begin{definition*}[more formal version of Definition~\ref{def:sim-dp}, Simulation-based privacy for protected attributes]

    Let records $(x,y)\in \cX \times \cY$ be such that $x$ is public or non-sensitive and thus need not be protected. We say that a randomized mechanism $M\colon (\cX\times \cY)^* \to \cZ$ is \simdp with respect to a simulator $\SIM \colon (\cX \times \cY)^*\times \cX \to \cZ$ if for all datasets $D\in (\cX \times \cY)^*$, $(x,y)\in D$, $D'=D \setminus \{(x,y)\}$,  and $E \subset \mathcal{Z}$ it holds
    \[
    \Pr[M(D) \in E] \leq e^\epsilon  \Pr[\SIM(D', x) \in E] + \delta.
    \]
    For the more advanced $f$-differential privacy notion,  we call a mechanism \simdp[f] with respect to $\SIM$ if
    \[
    \Pr[M(D) \in E] \leq f( \Pr[\SIM(D', x) \in E]).
    \]
\end{definition*} 

Semantically, \simdp means that  anything that can be inferred about $(x,y)$ from $M(D)$ could also be inferred without ever exposing $y$ to $M$, within the standard $(\epsilon, \delta)$-bounds.

Whenever we say that a mechanism satisfies $f$-(SIM)-DP, we implicitly imply that $f$ is a valid trade-off
function. That is, $f$ is defined on the domain $[0, 1]$ and has a range of $[0, 1]$. Moreover, $f$ is decreasing and
convex with $f(x) \leq 1 -x$ for all $x \in [0, 1]$. This is without loss of generality. 
That is, if
a mechanism is $f$-(SIM)-DP for an arbitrary function $f \colon [0, 1] \to [0, 1]$, then it is also $f'$-(SIM)-DP for a valid trade-off
function $f'$ with $f'
(x) \leq f(x)$ for all $x \in [0, 1]$ (See Proposition 2.2 in \cite{dong2019gaussian}).

\begin{table}
    \centering
    \begin{tabular}{l|c}
    \toprule
        DP definition & Difference between $D\sim D'$\\
        \midrule
        Replacement, DP & $x$, $x'$ \\
        Add/remove, DP & $x$\\
        Replacement, Label DP & $(x,y)$, $(x,y')$\\
        Add/remove, Label DP & $(x,y)$, $(x,\bot)$\\
        \bottomrule
    \end{tabular}
    \caption{The difference between  two neighboring datasets $D, D'$ under various DP definitions.}
\label{tab:dp-deltas}
\end{table}

 Note that $f$-DP generalizes $(\epsilon, \delta)$-DP by allowing a more complex
relation between the probability distributions of $M(D)$ and $M(D')$. The following proposition shows how one
can express $(\epsilon, \delta)$-DP as an instantiation of $f$-DP. Its analogue also holds for $\SIM$-DP.
\begin{proposition}[\cite{dong2019gaussian}]
A mechanism is $(\epsilon, \delta)$-DP if it is $f$-DP for $f$ satisfying $1-f(x) = e^\epsilon \cdot x + \delta$.
\end{proposition}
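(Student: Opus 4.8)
The plan is to unfold $f$-DP through its hypothesis-testing (trade-off function) characterization and then specialize a single test to an arbitrary event. Recall that, following \cite{dong2019gaussian}, saying $M$ is $f$-DP means that for every pair of neighboring datasets $D\sim D'$ the trade-off function $T(M(D),M(D'))$ dominates $f$ pointwise on $[0,1]$, where for distributions $P,Q$ we write $T(P,Q)(\alpha)=\inf\{\,1-\mathbb{E}_{Q}[\phi]\;:\;\phi\colon\mathcal{Z}\to[0,1],\ \mathbb{E}_{P}[\phi]\le\alpha\,\}$ for the least type-II error of a (possibly randomized) test of null $P$ against alternative $Q$ at type-I level $\alpha$. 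In the proposition, $f$ is the (clipped) trade-off function with $1-f(x)=e^{\epsilon}x+\delta$, i.e.\ $f(x)=\max\{0,\,1-\delta-e^{\epsilon}x\}$. Since the neighboring relation is symmetric, $f$-DP also gives $T(M(D'),M(D))\ge f$ for every neighboring pair; this is the ordering I will use.

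Next I would evaluate this bound on the ``reject iff the output lands in $E$'' test. Fix neighbors $D\sim D'$ and an arbitrary event $E\subseteq\mathcal{Z}$, and consider, for the testing problem with null $M(D')$ and alternative $M(D)$, the deterministic test $\phi=\mathbf{1}_{E}$. Its type-I error is $\alpha:=\Pr[M(D')\in E]$ and its type-II error is $\beta:=1-\Pr[M(D)\in E]$. Because $T(M(D'),M(D))(\alpha)$ is an infimum over all tests with type-I error at most $\alpha$ and $\phi$ is one such test, $\beta\ge T(M(D'),M(D))(\alpha)\ge f(\alpha)$. Rearranging and using $1-f(x)\le e^{\epsilon}x+\delta$ (equality before the clip at $0$, and the clip only lowers $1-f$) yields $\Pr[M(D)\in E]=1-\beta\le 1-f(\alpha)\le e^{\epsilon}\alpha+\delta=e^{\epsilon}\Pr[M(D')\in E]+\delta$. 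Since $D\sim D'$ and $E$ were arbitrary, this is precisely $(\epsilon,\delta)$-DP, and the analogous computation with $\SIM(D',x)$ in place of $M(D')$ gives the $\SIM$-DP version mentioned in the text.

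The argument is genuinely short, so there is no real ``hard part''; the two places that need care are purely bookkeeping. First, one must set up the test with $M(D')$ as the \emph{null} (and invoke $f$-DP for the ordered pair $(D',D)$), so that the resulting inequality has $M(D')$ -- not $M(D)$ -- on the right-hand side; getting this backwards would prove the wrong statement. Second, one should note that replacing $f$ by its clip to a valid trade-off function (range in $[0,1]$, decreasing, convex, $\le 1-x$), as the paper's convention allows, is harmless here because we only use $1-f(x)\le e^{\epsilon}x+\delta$, which survives the clip. The converse direction is not asked for, but would follow from the matching lower bound in the same hypothesis-testing characterization.
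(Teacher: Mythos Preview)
The paper does not supply its own proof of this proposition; it simply cites \cite{dong2019gaussian}. Your argument---specialize the $f$-DP trade-off bound to the indicator test $\phi=\mathbf{1}_E$ and read off $\Pr[M(D)\in E]\le 1-f(\Pr[M(D')\in E])\le e^{\epsilon}\Pr[M(D')\in E]+\delta$---is correct and is exactly the standard one-line derivation; your remarks on the null/alternative ordering and on clipping $f$ to a valid trade-off function are accurate bookkeeping.
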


\section{Auditing \texorpdfstring{$f$}--differential privacy}\label{sec:f_dp}
In this section, we prove Theorem~\ref{thm:fdp-shift} which generalizes the guarantees of \cite{mahloujifar2025auditing} for auditing $f$-SIM-DP, allowing for the case when we sample the counterfactual attributes from a proxy distribution $\datadist'$ different from $\datadist$.  

We first describe how the accuracy of the adversary from the observational attribute inference attack can be translated into a lower bound on $f$-SIM-DP. 


\begin{definition} [Obtaining empirical epsilon from {\simdp[$f$]} auditing]\label{def:f_dp_auditing}
    Let $(\game, \evaluate)$ be an audit procedure. The empirical privacy of a mechanism $M$ for a family $F$ of trade-off functions and a simulator $\SIM$ is the random variable distributed according to the output of the following process:
\begin{algorithmic}[1]
    \STATE Obtain observation $o\gets\game(M,\SIM)$.
    
    \STATE Construct $F_o = \mathrm{maximal} \set{f\in F\colon  \evaluate(o,f)=1}$, where the partial order on $F$ is defined as $f\prec g$ iff $f(x)\leq g(x)$ for all $x\in[0,1]$. 

    \STATE Compute 
    \[
\epsilon(\delta)=\min_{f\in F_o}\max_{x\in [0,1]} \log\left(\frac{1-f(x)-\delta}{x}\right).
\]
\end{algorithmic}
\end{definition}

The empirical lower bound $\epsilon(\delta)$ is a random variable since it is a function of the output $o$ of a randomized process $\game$. The point estimate  $\epsilon(\delta)$ is the lowest $\epsilon$ given $\delta$ guaranteed by an \simdp[$f$] \emph{not} rejected by the auditing procedure.

In Algorithm~\ref{alg:num_audit} we show how to audit a particular trade-off function $f$ given the number of non-abstaining guesses $c'$ and the number of correct guesses $c$. (This is the \evaluate\ function in Step 2 of Definition~\ref{def:f_dp_auditing}). The choice of a family of trade-off functions $F$ in Definition~\ref{def:f_dp_auditing} should be based on
the expectations of the true privacy curve. For example, if one expects the privacy curve of a mechanism to
be similar to that of a Gaussian mechanism, then one would choose the set of all trade-off functions imposed
by a Gaussian mechanism as the family. This is the choice we use in our experiments. The overall auditing procedure executes Algorithm~\ref{alg:num_audit} for each function $f$ in $F$ in increasing order of privacy strength and reports the strongest privacy guarantee accepted by Algorithm~\ref{alg:num_audit}.


\begin{algorithm}
\caption{Iteratively deciding an upper bound probability of making more than $c$ correct guesses~\citep{mahloujifar2025auditing}}\label{alg:num_audit}
\begin{algorithmic}[1]
\INPUT description of trade-off function $f$, number of guesses $c'$, number of correct guesses $c$, number of samples $m$, probability threshold $\gamma$ (default is $\gamma=0.05$).

\STATE $\forall 0\leq i\leq c$ set $h[i] = 0$, and $r[i]=0$.
\STATE Set $r[c] = \gamma\cdot \frac{c}{m}$.
\STATE Set $h[c] = \gamma\cdot \frac{c'-c}{m}$.
\FOR{$i \in [c-1, \dots, 0]$}
\STATE $h[i] = \bar{f}^{-1}\big(r[i+1]\big)$ \COMMENT{$\overline{f}(x) \defeq 1-f(x)$}
\STATE $r[i]=r[i+1] +\frac{i}{c'-i}\cdot\big(h[i]-h[i+1]\big).$
\ENDFOR
\IF{$r[0]+h[0]\geq \frac{c'}{m}$}
    \STATE Return False \COMMENT{Probability of $c$ correct guesses (out of $c'$) is less than $\gamma$}
\ELSE 
\STATE Return True \COMMENT{Probability of having $c$ correct guesses (out of $c'$) could be more than $\gamma$}
\ENDIF
\end{algorithmic}
\end{algorithm}

Finally, we re-state and prove Theorem \ref{thm:fdp-shift}. 
\begin{theorem}[Restated Theorem~\ref{thm:fdp-shift}, auditing $f$-DP with distribution shift]\label{thm:fdp-shift-restate}
Let $M\colon (\cX,\cY)^* \to \mathcal{Z}$ be a mechanism, $\datadist$ the data distribution, $\datadist'$ an approximate distribution, and $\SIM_{M, \datadist'}$ the imputation-based simulator (Definition~\ref{def:imputation-sim}).  Let $C = \sum_{i\in [m]} \mathbf{1}[b_i' = b_i]$ be the total number of correct answers from the one-run observational attribute inference attack (Algorithm~\ref{alg:aia_1}) for an adversary that makes $c'$ guesses. Let $\mathrm{TV}(\datadist|x,\datadist'|x) \leq \tau$ for all $x$ in the dataset~$D$ and define $g\colon[0,1]\to[0,1]$ such that $g(s)=f(\min(1,s+\tau))$. If $M$ is \simdp\ with respect to $\SIM_{M, \datadist'}$ and Algorithm~\ref{alg:num_audit} returns False on $(c', c, M, g  , \gamma)$, then
$\Pr[C\geq c]\leq \gamma.$
\end{theorem}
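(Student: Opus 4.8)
The plan is to reduce the theorem to the one-run auditing analysis of \citet{mahloujifar2025auditing} (Algorithm~\ref{alg:num_audit}). That analysis certifies $\Pr[C\ge c]\le\gamma$ whenever the privacy game has a \emph{per-coordinate trade-off property}: after conditioning on all randomness pertaining to the other $m-1$ coordinates, the adversary's view---seen as a hypothesis test between $b_i=0$ and $b_i=1$---has trade-off function at least the one fed to Algorithm~\ref{alg:num_audit}. Although \citet{mahloujifar2025auditing} phrase their result for an interventional game, the proof only uses this per-coordinate property together with the $\mathsf{Bernoulli}(1/2)$ structure of the coins, so it suffices to (i) show that the observational game of Algorithm~\ref{alg:aia_1} has this property with the function $g$ of \eqref{eq:tau}, and then (ii) run Algorithm~\ref{alg:num_audit} on $g$ unchanged.

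For step (i), fix $i$ and condition on $D_{-i}\defeq\{(x_j,y_j^0):j\ne i\}$, on $\{(b_j,y_j^1):j\ne i\}$, and on the adversary's internal coins. Under this conditioning the adversary's whole view is a fixed function of $(W,\ell_i)$ with $W=M(\train^0)$ and $\ell_i=y_i^{b_i}$; the key point is that $\train^0$ always contains the true record $(x_i,y_i^0)$, so $b_i$ controls only \emph{which} label is revealed, not the dataset $M$ sees. Let $P_0$ (resp.\ $P_1$) be the law of $(W,\ell_i)$ given $b_i=0$ (resp.\ $b_i=1$): under $P_0$, $\ell_i\sim\datadist\mid x_i$ and $W\sim M(D_{-i}\cup\{(x_i,\ell_i)\})$; under $P_1$, $\ell_i\sim\datadist'\mid x_i$ drawn independently of $W\sim M(D_{-i}\cup\{(x_i,y)\})$ with $y\sim\datadist\mid x_i$, i.e.\ $W\sim\SIM_{M,\datadist}(D_{-i},x_i)$. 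I would bound $T(P_0,P_1)$ below by $g$ through a chain $P_0\to Q_0\to Q_1\to P_1$, where $Q_0$ draws $\ell_i\sim\datadist'\mid x_i$ then $W\sim M(D_{-i}\cup\{(x_i,\ell_i)\})$, and $Q_1$ draws $\ell_i\sim\datadist'\mid x_i$ independently of $W\sim\SIM_{M,\datadist'}(D_{-i},x_i)$. \textbf{(a)} $\mathrm{TV}(P_0,Q_0)\le\tau$: these differ only in whether the revealed-and-trained label is distributed as $\datadist\mid x_i$ or $\datadist'\mid x_i$, and under a maximal coupling the two transcripts coincide with probability $\ge1-\tau$. \textbf{(b)} $T(Q_0,Q_1)\ge f$: the two share the same $\ell_i$-marginal, and conditioned on $\ell_i=v$ the hypothesized $f$-SIM-DP of $M$ with respect to $\SIM_{M,\datadist'}$ gives $M(D_{-i}\cup\{(x_i,v)\})\approx_f\SIM_{M,\datadist'}(D_{-i},x_i)$ for every $v$; since the $Q_1$-side conditional law does not depend on $v$, convexity of $f$ (Jensen's inequality) lifts this family of bounds to the $v$-mixtures. \textbf{(c)} $\mathrm{TV}(Q_1,P_1)\le\tau$: the data-processing inequality applied to the same coupling inside the simulator gives $\mathrm{TV}(\SIM_{M,\datadist'}(D_{-i},x_i),\SIM_{M,\datadist}(D_{-i},x_i))\le\tau$. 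Composing the two outer TV-$\tau$ perturbations around the $f$-hop---using that a TV-$\tau$ perturbation of the \emph{null} distribution turns the trade-off curve $f(s)$ into $f(\min(1,s+\tau))$---yields $T(P_0,P_1)\ge g$; since the bound is uniform over the conditioning, step (i) is complete.

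For step (ii), feeding $g$ into Algorithm~\ref{alg:num_audit} and invoking \citet{mahloujifar2025auditing} gives exactly the claim: if that algorithm returns False on $(c',c,M,g,\gamma)$, then $\Pr[C\ge c]\le\gamma$. I expect the main obstacle to lie in the trade-off-function bookkeeping of step (i)---specifically, arranging the intermediates $Q_0,Q_1$ so that the $\datadist$-versus-$\datadist'$ mismatch is charged to the \emph{null} side of the test, where a TV-$\tau$ shift becomes precisely the ``$s\mapsto s+\tau$'' in $g$, rather than to the alternative side, where a crude argument only yields an extra additive $\tau$ of slack; confirming that the composition lands on exactly $g$ of \eqref{eq:tau} is the delicate part (and is what forces the simulator in the hypothesis to be the $\datadist'$-imputation simulator). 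A secondary point is checking that the \citet{mahloujifar2025auditing} composition over coordinates still applies in our setting, where $W$ does not depend on the coins at all and the per-coordinate coupling manifests only as a correlation between $W$ and the revealed label---i.e.\ that conditioning on $D_{-i}$ and the other coordinates' artifacts genuinely reduces coordinate $i$ to the single binary test $P_0$ versus $P_1$.
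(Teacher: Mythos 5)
Your identification of the per-coordinate conditional laws $P_0$ and $P_1$ is correct, and hop (b) (Jensen over the shared $\ell_i$-marginal) is sound, but the quantitative core of step (i) does not deliver the function $g$. Your chain $P_0\to Q_0\to Q_1\to P_1$ contains \emph{two} TV-$\tau$ perturbations: one on the null side (the revealed-and-trained label, hop (a)) and one on the alternative side (the label imputed \emph{inside} the simulator, hop (c), i.e.\ $\SIM_{M,\datadist'}$ versus $\SIM_{M,\datadist}$). Composing them gives only $T(P_0,P_1)(s)\ge f(\min(1,s+\tau))-\tau$, not $g(s)=f(\min(1,s+\tau))$. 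You flag this as ``the delicate part,'' but with your intermediates the second mismatch cannot be charged to the null side: under $P_1$ the mechanism output genuinely comes from a $\datadist$-trained dataset, while the hypothesis only provides a trade-off against $\SIM_{M,\datadist'}$; any rearrangement of the intermediates I can see (e.g.\ routing the $f$-hop through $\SIM_{M,\datadist'}$ first and then moving to $\SIM_{M,\datadist}$) still pays an additive $\tau$ on the alternative side, or $2\tau$. The paper avoids this problem by never forming a per-coordinate trade-off at all: its Lemma~\ref{lemma:key_lemma} bounds the joint quantity $p=\Pr[M(D)\in E\wedge b_1=b_1']$ against $q=\Pr[M(D)\in E]$, applying the $f$-SIM-DP hypothesis plus convexity and then invoking the TV bound exactly once on the resulting event, which yields $p\le 1-f(q-p+\tau)$ --- a single $\tau$ inside $f$, which is precisely what produces $\bar f(s)=1-f(s+\tau)$ (equivalently $g$) in the downstream recursion.

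The second gap is step (ii): you invoke the analysis of \citet{mahloujifar2025auditing} as a black box that accepts any game with a ``per-coordinate trade-off property,'' but that interface is asserted, not established --- neither in the prior work nor here. The paper does not use such a reduction; it re-proves the entire pipeline for this game and this shifted setting: the exchangeability/double-counting argument (Theorem~\ref{thm:permutation}) giving the constraints $\sum_{i\in T}\tfrac{i}{m}p_i\le 1-f\bigl(\tau+\sum_{i\in T}\tfrac{c'-i+1}{m}p_{i-1}\bigr)$, the recursion analysis of Algorithm~\ref{alg:num_audit} with $\bar f(s)=1-f(s+\tau)$ (Lemma~\ref{lem:audit_alg}), and a monotonicity-in-$\tau$ lemma to conclude $\Pr[C\ge c]\le\gamma$. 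These steps couple the coordinates through the single mechanism output and through the counts $c,c'$, so they do not follow automatically from a conditional per-coordinate statement; your own ``secondary point'' (that $W=M(\train^0)$ does not depend on the coins) is exactly the structural difference that must be handled, and it is handled in the paper by redoing the joint-event argument rather than by citation. To repair your write-up you would need either to prove the single-$\tau$ per-coordinate claim (not just chain TV bounds) \emph{and} prove that the prior recursion analysis factors through it, or to follow the paper's route and establish the joint-event inequality of Lemma~\ref{lemma:key_lemma} directly.
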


\begin{proof}
The proof follows similarly to the proof of Theorem 3.2 in \cite{mahloujifar2025auditing}. We first need to prove a similar Lemma to that of their Lemma A.1 that is adapted to our setting of simulation based differential privacy with distribution shift. 
\begin{lemma}\label{lemma:key_lemma}
 Let $M\colon (\cX \times \cY)^* \to \cZ$ be a mechanism, $\datadist$ a distribution on $\cX \times \cY$ and $\SIM_{M, \datadist'}$ the imputation-based simulator (Definition~\ref{def:imputation-sim}). Assume $\mathrm{TV}(\cD| x,\cD'| x)\leq \tau$ for all $x$ in the dataset $\train$. If $M$ is \simdp[f]\ with respect to $\SIM_{M, \datadist'}$, then for any attack algorithm $A$ and event $E \subset \cZ$ we have
$$f_\tau''(\Pr[M(D) \in E])\leq \Pr[M(D) \in E \And b_1=b'_1] \leq f_\tau'(\Pr[M(D) \in E]),$$
where 
\[f_\tau'(s) \defeq\sup\set{t\in [0,s];t + f(s-t + \tau)\leq 1}\quad \text{ and}\quad f''_\tau(s) \defeq\inf\set{t\in [0,1];f(t) +s -t \leq 1-\tau}.\]
\end{lemma}

\begin{proof}
Fix a sample $(x, y^0)$ with counterfactual attributes $y^1$. Let $D$ denote the dataset containing $(x, y^0)$ and let $D'$ be a dataset obtained from $D$ by replacing $y^0$ with $y^1$. We assume $b'_1$ is a deterministic function of $M(D)$ and $y^{b_1}$ (i.e., the attack is deterministic).  Let $p\defeq\Pr[M(D) \in E \And b_1=b'_1]$ and $q\defeq\Pr[M(D) \in E]$. We have
\begin{align*}p&=\Pr[M(D) \in E \And b_1=b'_1]\\
& =  \Pr[M(D) \in E \And b_1= 1 \And b'_1 = 1]\\
&~~~+ \Pr[M(D) \in E \And b_1= 0 \And b'_1 = 0]\\
& =  \E_{y^1, y^0, b_1, \theta\sim M(D)}[I(\theta \in E \And b_1= 1 \And b'_1(\theta, y^1) = 1)]\\
&~~~+ \E_{y^1, y^0, b_1, \theta\sim M(D)}[I(\theta \in E \And b_1= 0 \And b'_1(\theta, y^0) = 0)] \\
& =  0.5\cdot \E_{y^1, y^0,  b_1, \theta\sim M(D)}[I(\theta \in E \And  b'_1(\theta, y^1) = 1)\mid b_1=1]\\
&~~~+ 0.5\cdot\E_{y^1, y^0,  b_1, \theta\sim M(D)}[I(\theta \in E \And b'_1(\theta, y^0) = 0)\mid b_1=0] \\
&=  0.5\cdot \E_{y^1, y^0,   b_1, \theta\sim M(D), \theta' \sim M(D')}[I(\theta \in E \And  b'_1(\theta, y^1) = 1)\mid b_1=1]\\
&~~~+ 0.5\cdot\E_{y^1, y^0,   b_1, \theta\sim M(D), \theta' \sim M(D')}[I(\theta \in E \And b'_1(\theta, y^0) = 0)\mid b_1=0] \\
&\leq 0.5 \cdot \Big(1-f(\E_{y^1, y^0,   b_1, \theta\sim M(D), \theta' \sim M(D')}[I(\theta' \in E \And  b'_1(\theta', y^1) = 1)\mid b_1=1])\Big)\\
&~~~+ 0.5 \cdot \Big(1-f(\E_{y^1, y^0,   b_1, \theta\sim M(D), \theta' \sim M(D')}[I(\theta' \in E \And  b'_1(\theta', y^0) = 0)\mid b_1=0])\Big)\\
&\leq 1 - f(\E_{y^1, y^0,   b_1, \theta\sim M(D), \theta' \sim M(D')}[I(\theta' \in E \andt b_1 \neq b_1'(\theta', y_1^{1-b_1})])~~~~\text{(By convexity of $f$.)}
\end{align*}
Now since, $\mathrm{TV}(\theta', \theta)\leq \tau$, we have $\Pr[[\theta' \in E] \leq \Pr[\theta \in E] + \tau$ for all $E$. this, by the fact that $f$ is decreasing implies,

$$p \leq 1-f(q-p+\tau).$$
Therefore, $p + f(q-p+\tau) \leq 1$ and $p\leq f_\tau'(q).$ Similarly, for the other side we repeat the argument up until the last 3 steps. That is
\begin{align*}
    q&=  0.5\cdot \E_{y^1, y^0,   b_1, \theta\sim M(D), \theta' \sim M(D')}[I(\theta \in E \andt  b'_1(\theta, y^1) = 1)\mid b_1=1]\\
&~~~+ 0.5\cdot\E_{y^1, y^0,   b_1, \theta\sim M(D), \theta' \sim M(D')}[I(\theta \in E \andt b'_1(\theta, y^0) = 0)\mid b_1=0] \\
&\geq 0.5 \cdot \Big(f^{-1}\Big(1-\E_{y^1, y^0,   b_1, \theta\sim M(D), \theta' \sim M(D')}[I(\theta' \in E \andt  b'_1(\theta', y^1) = 1)\mid b_1=1]\Big)\Big)\\
&~~~+ 0.5 \cdot \Big(f^{-1}\Big(1-\E_{y^1, y^0,   b_1, \theta\sim M(D), \theta' \sim M(D')}[I(\theta' \in E \andt  b'_1(\theta', y^0) = 0)\mid b_1=0]\Big)\Big)\\
&\geq f^{-1}(1-\E_{y^1, y^0,   b_1, \theta\sim M(D), \theta' \sim M(D')}[I(\theta' \in E \andt b_1 \neq b_1'])~~~~\text{(By convexity of $f$.)}\\
&= f^{-1}(1-q+p +\tau)~~~~\text{(By the fact that $f^{-1}$ is decreasing.)}.
\end{align*}
This implies $p\geq f^{-1}(1-q+p)$ which in turn implies $f(p) +q - p +\tau \leq 1.$ So we have $p\geq f''(q).$
\end{proof}
We can now plug Lemma~\ref{lemma:key_lemma} in the proof of Theorem 3.2 in \cite{mahloujifar2025auditing}. For completeness, we state and prove all the steps from \cite{mahloujifar2025auditing} although they are highly similar.

\begin{proposition}\label{prop:convexity_of_f'}
The functions $f'_\tau$ as defined in Lemma \ref{lemma:key_lemma} is increasing and concave. The functions $f''_\tau$ as defined in Lemma \ref{lemma:key_lemma} is increasing and convex.
\end{proposition}
\begin{proof}
Please refer to the proof of Proposition A.2 in \cite{mahloujifar2025machine}.
\end{proof}

Now we prove the following theorem which imposes a recursive relation on the number of correct guesses.

\begin{theorem} \label{thm:permutation}
Let $M\colon (\cX \times \cY)^* \to \cZ$ be a mechanism, $\datadist$ a distribution on $\cX \times \cY$ and $\SIM_{M, \datadist'}$ the imputation-based simulator (Definition~\ref{def:imputation-sim}) and  $\forall x\in \cX, \mathrm{TV}(\datadist'\mid x, \datadist \mid x)\leq \tau$.  Let $A\colon \cZ \to \set{0,1,\bot}^m$ be an attacker which always makes at most $c'$ guesses, that is  
$$\forall z\in \cZ, \Pr\Big[\Big(\sum_{i=1}^m I\big(A(z)_i \neq \bot\big)\Big) > c'\Big] = 0,$$ and let $\rvar{b}$ and $\rvar{b'}$ be the random vectors of the observational attribute inference game defined in Algorithm~\ref{alg:aia_1} when instantiated with attacker $A$. Define $p_i=\Pr\Big[\Big(\sum_{j\in [m]}\mathbf{I}(b_j=b'_j)\Big) =i\Big]$.  If $M$ is \simdp[f]\ with respect to $\SIM_{M, \datadist'}$, for all subsets of indices  $T \subseteq[c']$,  we have
$$\sum_{i\in T} \frac{i}{m}p_i \leq 1- f(\tau + \sum_{i\in T} \frac{c'-i+1}{m}p_{i-1} ).$$
\end{theorem}
\begin{proof}[Proof of Theorem \ref{thm:permutation}]
Instead of working with an adversary with $c'$ guesses, we assume we have an adversary that makes a guess on all $m$ inputs, however, it also submits a vector $\rvar{q}\in \set{0,1}^m$, with exactly $c'$ 1s and $m-c'$ 0s. So the output of this adversary is a vector $\rvar{b'}\in\set{0,1}^m$ and a vector $\rvar{q}\in \set{0,1}^m$. Then, only correct guesses that are in locations that $\rvar{q}$ is non-zero are counted. That is, 
if we define a random variable $\rvar{t}=(\rvar{t}_1,\dots, \rvar{t}_m)$ as $\rvar{t}_i = \mathbf{I}(\rvar{b}_i = \rvar{b'_i})\cdot \rvar{q}_i$ then we have 

\begin{align*}
    \sum_{j\in T} p_j &=\sum_{j\in T } \Pr\Big[\sum_{i=1}^m \rvar{t}_i=j \Big]\\
    &=\sum_{j\in T} \Pr\Big[\sum_{i=2}^m \rvar{t}_i=j \andt \rvar{t}_1=0\Big] + \sum_{j\in T} \Pr[\sum_{i=2}^m \rvar{t}_i=j-1 \andt \rvar{t}_1=1 ]\\
    &=\Pr[\sum_{i=2}^m \rvar{t}_i\in T \andt \rvar{t}_1 = 0] + \Pr[1+\sum_{i=2}^m \rvar{t}_i \in T \andt \rvar{t}_1=1 ]\\
    &=\Pr[\sum_{i=2}^m \rvar{t}_i\in T \andt \rvar{t}_1 = 0] + \Pr[1+\sum_{i=2}^m \rvar{t}_i \in T \andt \rvar{b_1} = \rvar{b_1'} \andt \rvar{q}_1 = 1]\\
\end{align*}

Now we only use the inequality from Lemma \ref{lemma:key_lemma} for the second quantity above. Using the inequality for both probabilities is not ideal because they cannot be tight at the same time. 
So we have,
\begin{align*}
    \sum_{j\in T} p_j \leq \Pr[\sum_{i=2}^m \in T \andt \rvar{t}_1] + f'_\tau(\Pr[ 1+\sum_{i=2}^m \rvar{t}_i\in T \andt \rvar{q}_1=1]).
\end{align*}
Now we use the fact that this inequality is invariant to the order of indices. So we can permute $\rvar{t_i}$'s and the inequality still holds. We have,
\begin{align*}
\sum_{j\in T} p_j &\leq \E_{\pi \sim \Pi[m]}[\Pr[\sum_{i=2}^m \rvar{t}_{\pi(i)}\in T \andt \rvar{t}_{\pi(1)}=0]] + \E_{\pi \sim \Pi[m]}[f'_\tau(\Pr[1+\sum_{i=2}^m \rvar{t}_{\pi(i)}\in T\andt {\rvar{q}_{\pi(1)}=1}]) ]\\
& \leq \E_{\pi \sim \Pi[m]}[\Pr[\sum_{i=2}^m \rvar{t}_{\pi(i)}\in T \andt \rvar{t}_{\pi(1)}=0]] + f'_\tau(\E_{\pi \sim \Pi[m]}[\Pr[1+\sum_{i=2}^m \rvar{t}_{\pi(i)}\in T \andt \rvar{q}_{\pi(1)}=1]]).
\end{align*}
Now we perform a double counting argument. Note that when we permute the order $\sum_{i=2}^m \rvar{t}_{\pi(i)}=j \andt \rvar{t}_{\pi(1)}=0$ counts each instance $t_1,\dots, t_m$ with exactly $j$  non-zero locations, for exactly $(m-j)\times (m-1)!$ times. Therefore, we have  
$$\E_{\pi \sim \Pi[m]}[\Pr[\sum_{i=2}^m \rvar{t}_{\pi(i)} \in T \andt \rvar{t}_{\pi(1)}=0]] = \sum_{j\in T} \frac{m-j}{m}p_j.$$

With a similar argument we have,
\begin{align*}\E_{\pi \sim \Pi[m]}[\Pr[ 1+ \sum_{i=2}^m \rvar{t}_{\pi(i)}\in T \andt \rvar{q}_{\pi(1)}=1] ] &= \sum_{j\in T} \frac{c'-j+1}{m} p_{j-1} + \frac{j}{m} p_{j}.
\end{align*}
Then, we have
\begin{align*}
\sum_{j\in T} p_j
& \leq \sum_{j\in T} \frac{m-j}{m}p_j + f'_\tau(\sum_{j \in T}\frac{j}{m}p_j + \frac{c'-j+1}{m}p_{j-1})\\
&= \sum_{j\in T} \frac{m-j}{m}p_j + f'_\tau(\sum_{j \in T}\frac{j}{m}p_j + \frac{c'-j+1}{m}p_{j-1}).
\end{align*}

And this implies
\begin{align*}
\sum_{j\in T} \frac{j}{m}p_j
\leq f'_\tau(\sum_{j \in T}\frac{j}{m}p_j + \frac{c'-j+1}{m}p_{j-1}).
\end{align*}
And this, by definition of $f'_\tau$ implies
\begin{align*}
\sum_{j\in T} \frac{j}{m}p_j
\leq 1 - f(\sum_{j \in T}\frac{c'-j+1}{m}p_{j-1} + \tau).
\end{align*}
\end{proof}


We now state and prove a lemma implied by Theorem~\ref{thm:permutation}.

\begin{lemma}\label{lem:audit_alg}
For all $c\leq c'\in [m]$ let us define 
$$\alpha_c=\sum_{i=c}^{c'} \frac{i}{m}p_i
~~~~\andt~~~~ \beta_c = \sum_{i=c}^{c'} \frac{c'-i}{m}p_i$$ 



We also define a family of functions $r=\set{r_{i,j}:[0,1]\times[0,1]\to[0,1]}_{i\leq j \in [m]}$ and $h=\set{h_{i,j}: [0,1]\to [0,1]}$ that are defined recursively as follows.
Assume $\bar{f}(s)=1-f(s+\tau)$ for $s\in[0,1-\tau]$ and also $\bar{f}^{-1}(r) = f^{-1}(1-r) - \tau$. Now  define
$\forall i \in [m]: r_{i,i}(\alpha, \beta) = \alpha$ and  $h_{i,i}(\alpha, \beta) = \beta$
and for all $i<j$ we have
$$h_{i,j}(\alpha, \beta)= \bar{f}^{-1}\Big(r_{i+1,j}(\alpha,\beta)\Big)$$
$$r_{i,j}(\alpha,\beta) = r_{i+1,j}(\alpha,\beta) + \frac{i}{c'-i}(h_{i,j}(\alpha, \beta) - h_{i+1,j}(\alpha, \beta))$$

 Then for all $i\leq j$ we have 
 $$\alpha_i \geq r_{i,j}(\alpha_j,\beta_j) ~~~\andt~~~ \beta_i \geq h_{i,j}(\alpha_j,\beta_j)$$

 Moreover, for $i<j$,  $r_{i,j}$ and $h_{i,j}$ are increasing with respect to their first argument and decreasing with respect to their second argument. 
\end{lemma}

\begin{proof}[Proof of Lemma \ref{lem:audit_alg}]
We prove this by induction on $j-i$. For $j-i=0$, the statement is trivially correct.
We have
$$h_{i,j}(\alpha_j,\beta_j)= \bar{f}^{-1}(r_{i+1,j}(\alpha_j,\beta_j)).$$
By the induction hypothesis, we have $r_{i+1,j}(\alpha_j,\beta_j)\leq \alpha_{i+1}$. Therefore we have
\begin{align}
    h_{i,j}(\alpha_j,\beta_j)\leq \bar{f}^{-1}(\alpha_{i+1}).\label{cor1:eq1}
\end{align}
Now by invoking Theorem \ref{thm:permutation}, we have
$$\alpha_{i+1} \leq \bar{f}(\beta_i).$$ Now since $\bar{f}$ is increasing, this implies 
\begin{align}\bar{f}^{-1}(\alpha_{i+1})\leq \beta_i\label{cor1:eq2}
\end{align}
Now putting, inequalities \ref{cor1:eq1} and \ref{cor1:eq2} together we have
$h_{i,j}(\alpha_j,\beta_j) \leq \beta_i.$ This proves the first part of the induction hypothesis for the function $h$. The function $h_{i,j}$ is increasing in its first component and decreasing in the second component by invoking the induction hypothesis and the  fact that $\bar{f}^{-1}$ is increasing. 
Consider the function $r_{i,j}$. There is an alternative form for $r_{i,j}$ by opening up the recursive relation. Let $\gamma_z= \frac{z}{c'-z} - \frac{z-1}{c'-z+1}$. We have , 
\begin{align*}
r_{i,j}(\alpha,\beta) &= r_{j,j}(\alpha,\beta) +  \frac{i}{c'-i}h_{i,j}(\alpha, \beta) - \frac{j-1}{c'-j+1}h_{j,j}(\alpha, \beta) 
 + \sum_{z=i+1}^{j-1}\gamma_zh_{z,j}(\alpha, \beta)\\
 &= r_{j,j}(\alpha,\beta) +  \frac{i}{c'-i}h_{i,j}(\alpha, \beta) -\frac{j}{c'-j}h_{j,j}(\alpha, \beta)+\sum_{z=i+1}^{j}\gamma_zh_{z,j}(\alpha, \beta)\\
 &=\alpha - \frac{j}{c'-j}\beta + \frac{i}{c'-i}h_{i,j}(\alpha,\beta) + \sum_{z=i+1}^{j}\gamma_zh_{z,j}(\alpha, \beta).\numberthis\label{eq:r_alternative}
 \end{align*}

Now observe that for all $i$ we have
\begin{align}\alpha_{i} = \frac{i}{c'-i} \beta_{i} + \sum_{z={i+1}}^m {\gamma_z}\beta_z.\label{eq:alpha_to_beta}
\end{align}

Therefore for all $i<j$ we have 
$$\alpha_{i}-\alpha_{j}= \frac{i}{c'-i}\beta_i - \frac{j}{c'-j}\beta_j +\sum_{z={i+1}}^j {\gamma_z}\beta_z$$
Now, using the induction hypothesis for $h$ we have, 
\begin{align*}
\alpha_i \geq \alpha_j + \frac{i}{c'-i} h_{i,j}(\alpha_j, \beta_j) -\frac{j}{c'-j} \beta_{j} + \sum_{z=i+1}^j{\gamma_z}h_{z,j}(\alpha_j,\beta_j).\numberthis \label{cor:eq3}
\end{align*}
Now verify that the right hand side of 
Equation $\ref{cor:eq3}$ is equal to $r_{i,j}(\alpha_j, \beta_j)$ by the formulation of Equation \ref{eq:r_alternative}

Also, using the induction hypothesis, we can observe that the right hand side of $\ref{eq:r_alternative}$ is increasing in $\alpha_j$ and decreasing in $\beta_j$ because all terms there are increasing in $\alpha_j$ and decreasing in $\beta_j$.
\end{proof}

Now assume that the probability of correctly guessing $c$ examples out of $c'$ guesses is equal to $\tau'$. 
Namely, $\alpha_c + \beta_c = \frac{c'}{m}\tau'$.
Note that
\begin{align}\frac{\alpha_c}{\beta_c}\geq \frac{c}{c'-c}\label{eq:suboptimality_1}\end{align}
therefore, we have 
\begin{align}\alpha_c\geq \frac{c}{m}\tau'\label{eq:suboptimality_2}\end{align} and $\beta_c\leq \frac{c'-c}{m}\tau'$. Therefore, using Lemma \ref{lemma:key_lemma} we have
$\alpha_0 \geq r_{0,c}(\frac{c}{m}\tau', \frac{c'-c}{m}\tau')$
and 
$\beta_0 \geq h_{0,c}(\frac{c}{m}\tau', \frac{c'-c}{m}\tau').$ 

Now we prove a lemma about the function $s_{i,j}(\tau) = h_{i,j}(\frac{c}{m}\tau,\frac{c'-c}{m}\tau) + r_{i,j}(\frac{c}{m}\tau,\frac{c'-c}{m}\tau)$.
\begin{lemma}
The function $s_{i,j}(\tau) = h_{i,j}(\frac{c}{m}\tau,\frac{c'-c}{m}\tau) + r_{i,j}(\frac{c}{m}\tau,\frac{c'-c}{m}\tau)$ is increasing in $\tau$ for $i<j\leq c$.
\end{lemma}
\begin{proof}
To prove this, we show that for all $i<j\leq c$ both $r_{i,j}(\frac{c}{m}\tau,\frac{c'-c}{m}\tau)$ and $h_{i,j}(\frac{c}{m}\tau,\frac{c'-c}{m}\tau)$ are increasing in~$\tau$. We prove this by induction on $j-i$. For $j-i=1$, we have 
$$h_{i,i+1}(\frac{c}{m}\tau,\frac{c'-c}{m}\tau) =(k-1)\bar{f}^{-1}(\frac{c}{m}\tau).$$ 
We know that $\bar{f}^{-1}$ is increasing, therefore $h_{i,i+1}(\frac{c}{m}\tau,\frac{c'-c}{m}\tau)$ is increasing in $\tau$ as well. 
For $r_{i,i+1}$ we have
\[r_{i,i+1}(\frac{c}{m}\tau,\frac{c'-c}{m}\tau) = \frac{c}{m}\tau + \frac{i}{c'-i}(h_{i,i+1}(\frac{c}{m}\tau,\frac{c'-c}{m}\tau) - \frac{c'-c}{m}\tau).\]
So we have 

\begin{align*}r_{i,i+1}(\frac{c}{m}\tau,\frac{c'-c}{m}\tau)&= \frac{c(c'-i) -i(c'-c)}{m(c'-i)}\tau + \frac{i}{c'-i}h_{i,i+1}(\frac{c}{m}\tau,\frac{c'-c}{m}\tau)\\
&=\frac{(c-i)c'}{m(c'-i)}\tau + \frac{i}{c'-i}h_{i,i+1}(\frac{c}{m}\tau,\frac{c'-c}{m}\tau).
\end{align*}
We already proved that $h_{i,i+1}(\frac{c}{m}\tau,\frac{c'-c}{m}\tau)$ is increasing in $\tau$. We also have $\frac{(c-i)c'}{m(c'-i)}>0$, since $i<c$. Therefore $$r_{i,i+1}(\frac{c}{m}\tau,\frac{c'-c}{m}\tau)$$ is increasing in $\tau$. So the base of the induction is proved. Now we focus on $j-i>1$. For $h_{i,j}$ we have
$$h_{i,j}(\frac{c}{m}\tau,\frac{c'-c}{m}\tau)=(k-1)\bar{f}^{-1}(r_{i+1,j}(\frac{c}{m}\tau,\frac{c'-c}{m}\tau).$$

By the induction hypothesis, we know that $r_{i+1,j}(\frac{c}{m}\tau,\frac{c'-c}{m}\tau)$ is increasing in $\tau$, and we know that $\bar{f}^{-1}$ is increasing, therefore, $h_{i,j}(\frac{c}{m}\tau,\frac{c'-c}{m}\tau)$ is increasing in $\tau$.

For $r_{i,j}$, note that we rewrite it as follows

$$r_{i,j}(\alpha, \beta)= \alpha -\frac{j}{c'-j}\beta +\sum_{z=i}^{j-1} \lambda_z\cdot h_{z,j}(\alpha,\beta)$$

where $\lambda_z=(\frac{z+1}{c'-z-1} - \frac{z}{c'-z})\geq 0$. Therefore, we have

\begin{align*}r_{i,j}(\frac{c}{m}\tau,\frac{c'-c}{m}\tau) &= \tau(\frac{c}{m} - \frac{(c'-c)j}{m(c'-j)})+\sum_{z=i}^{j-1} \lambda_z\cdot h_{z,j}(\frac{c}{m}\tau,\frac{c'-c}{m}\tau)\\
&=\tau\frac{c'(c-j)}{m(c'-j)}+\sum_{z=i}^{j-1} \lambda_z\cdot h_{z,j}(\frac{c}{m}\tau,\frac{c'-c}{m}\tau).
\end{align*}
Now we can verify that all terms in this equation are increasing in $\tau$, following the induction hypothesis and the fact that $\lambda_z>0$ and also $j\leq c$.
\end{proof}
Now using this Lemma, we finish the proof.
Note that we have 
$\alpha_0 + \beta_0 = \frac{c'}{m}$.

Assuming that $\tau'\geq\tau$, then we have
\[
\frac{c'}{m}=\alpha_0 + \beta_0 \geq s_{0,c}(\tau') \geq s_{0,c}(\tau).
\]
The last step of the algorithm checks whether $s_{0,c} \geq \frac{c'}{m}$. If so, then $\tau'\leq \tau$, because $s_{0,c}$ is increasing in $\tau$. This means that the probability of having more than $c$ guesses cannot be more than $\tau$. 
\end{proof}

\section{Auditing Randomized Response}\label{sec:rr} 

We empirically demonstrate the tightness of our Label DP auditing algorithm for Randomized Response on synthetic data sampled from a mixture of Gaussian distributions. 




We experiment with binary and multi-class labels. The labels $y$ are sampled from $\mathsf{Multinouli}(k, (\frac{1}{k}, \dots, \frac{1}{k}))$, a distribution with $k$ classes and equal probabilities for each class. Given a label $y$, the features $x$ are sampled according to:
\begin{align}\label{eq:gaussian}
    x \mid y \sim \mathcal{N}(e_y, I_d),
\end{align}
where $I_d$ is the $d$-dimensional identity matrix, and $e_y$ is a $d$-dimensional index vector that is $1$ at index $y$.  

The output of the Randomized Response mechanism are the noisy labels. Recall that $y_i^0$ is the training set label and $y_i^1$ is the reconstructed label. We generate $y_i^1$ in two ways: 
\begin{itemize}
    \item Ground truth: By using the posterior $y\mid x$ on the label, for $y$ generated as in (\ref{eq:gaussian}).
    \item Proxy: By using the predictions of a Logistic Regression model trained on fresh data from the distribution (with default hyper-parameters from $\mathsf{scikit}$-$\mathsf{learn}$). 
\end{itemize}
This way, we compare the effect of using a proxy distribution instead of the ground truth for generating the labels $y_i^1$. In Figures~\ref{fig:rr_0p1pct} and~\ref{fig:rr_1pct}, we see that the two methods for generating $y^1$ yield similar results, as Logistic Regression can approximate a mixture of Gaussians quite well. 

We use $n = 10^6$, $d=5$, and the number of classes  $k \in \{2, 5, 10\}$. We run our attack on the output of Randomized Response and with a fraction of non-abstaining guesses in $\{0.1\%, 1\%\}$.  We use Theorem~\ref{thm:fdp-shift} with $\tau=0$ to obtain the empirical epsilon achieved at $95\%$ confidence. For each dataset and Randomized Response output, we repeat the game $10^2$ times (with a fresh vector of reconstructed labels) and compute the average and standard deviation of the empirical epsilon. 

 Fig.~\ref{fig:rr_0p1pct} showed that with (fewer) $0.1\%$ adversarial guesses we overestimate the privacy loss at small theoretical epsilon, but the lower bound is tight with (more) $1\%$ guesses in the low-epsilon regime (Fig.~\ref{fig:rr_1pct}). However, the lower bounds are not as tight in the medium epsilon regime $[1,4]$. These experiments show that choosing the number of adversarial guesses should take into consideration the privacy regime we are targeting.  

\begin{figure}
    \centering
    \includegraphics[width=\linewidth]{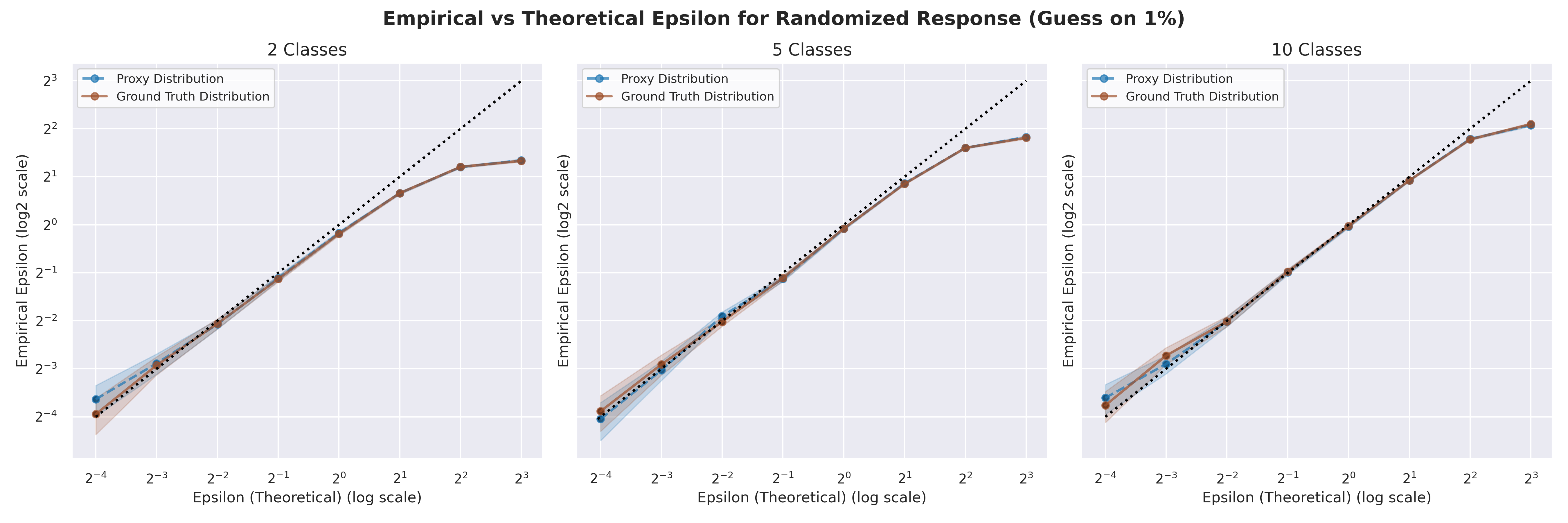}
    \caption{Auditing randomized response when the adversary guesses on 1\% of samples. The counterfactual labels are generated either from the ground-truth distribution or a proxy distribution yielded by the Logistic Regression model.}
    \label{fig:rr_1pct}
\end{figure}

\textbf{Impact of distribution shift $\tau$.} To demonstrate the impact of the distribution shift $\tau$ on the measured empirical epsilon, we modify our synthetic experiments on randomized response so that proxy labels are generated for a \emph{shifted} ground truth label posterior. We focus on the binary case, where training labels $y^0 \in \{0,1\}$. We produce proxy labels as
\begin{align*}
y^1 \sim \mathsf{Bernoulli}(s^\tau_x), \text{where } s_x = \min(\Pr[y = 1 \mid x] + \tau, 1).
\end{align*}
Note that $\mathsf{TV}(\mathsf{Bernoulli}(s^\tau_x), y \mid x) \leq \tau$. 

We vary the theoretical $\epsilon \in \{0.5, 2, 4\}$ and $\tau \in \{10^{-6}, 10^{-5}, \dots, 10^{-1}\}$, and measure empirical $\epsilon$ according to Theorem~\ref{thm:fdp-shift} at different guess fractions for the adversary. As for other experiments, $\epsilon$ is computed at $95\%$ confidence and standard deviation is computed over $100$ game repetitions. 

In Fig.~\ref{fig:plots_tau} we see that as expected, for larger values of $\tau$ the empirical epsilon measured decreases, i.e., decreasing the quality of the simulator decreases the privacy leakage that can be measured. At $\tau \geq 10^{-2}$ the empirical epsilon is near $0$. This is a limitation of the bound of Theorem~\ref{thm:fdp-shift}, which has an additive dependence on $\tau$. Allowing the adversary to make more guesses at larger $\tau$ can ameliorate gap between the measured empirical $\epsilon$ and theoretical $\epsilon$. 

\begin{figure}
    \centering
    \includegraphics[width=\linewidth]{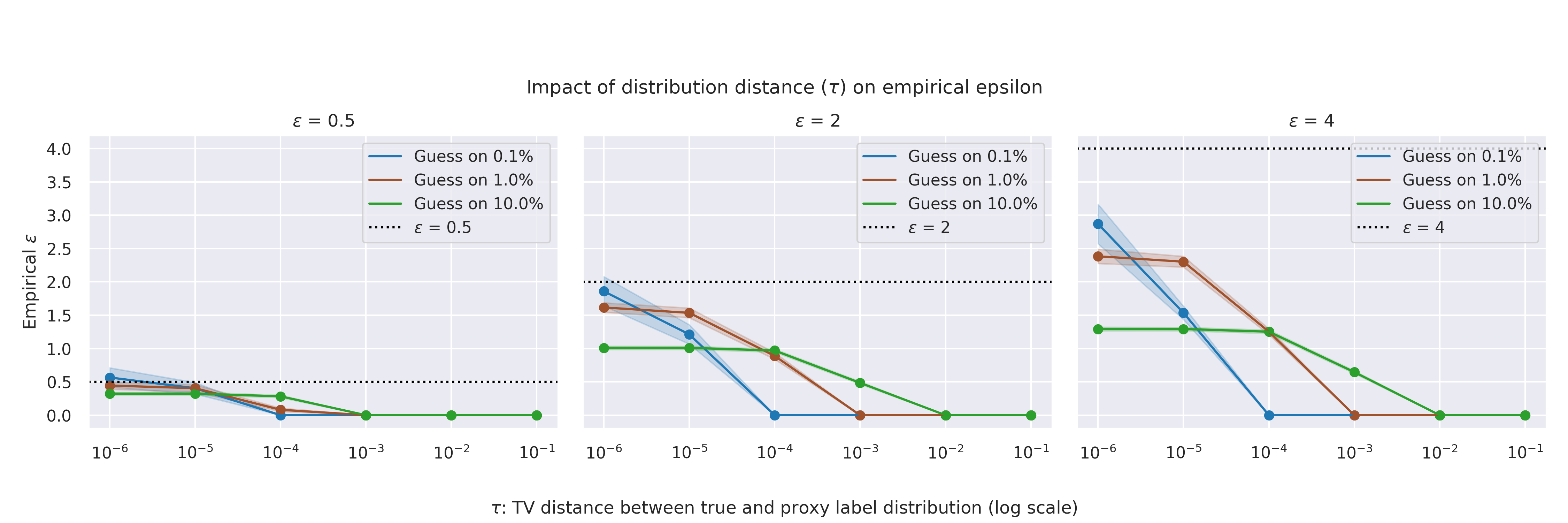}
    \caption{Auditing randomized response when synthetic labels are produced from a $\tau$-shifted distribution from the ground truth label distribution.}
    \label{fig:plots_tau}
\end{figure}

\section{Accuracy of Models trained on Criteo and CIFAR-10}
\label{app:models_accuracy}
Tables~\ref{tab:cifar10_test_acc} and \ref{tab:log_loss_criteo} show the performance of models trained on CIFAR-10 and Criteo, respectively, for different Label DP algorithms and privacy budgets $\eps$. 

\begin{table}[t]
\centering
\caption{CIFAR-10. Model test accuracy of Label DP models under different $\epsilon$.}
\label{tab:cifar10_test_acc}
\begin{tabular}{lccc}
\toprule
Label DP Algorithm & \multicolumn{3}{c}{CIFAR-10} \\
& $\epsilon = \infty$ & $\eps = 10.0$ & $\eps = 1.0$ \\
\midrule
LP-1ST & 91.3 & 91.07 & 60.4 \\
LP-1ST (out-of-domain prior) & 92.1 & 91.5 & 87.9 \\
PATE-FM & 92.5 & 92.3 & 91.3 \\
ALIBI & 90.1 & 87.1 & 66.9 \\
\bottomrule
\end{tabular}
\end{table}

\begin{table}[t]
\centering
\caption{Criteo. Log-loss of Label DP algorithms on the test set under different $\eps$.}
\label{tab:log_loss_criteo}
\begin{tabular}{lcccccc}
\toprule
Label DP Algorithm & $\eps=\infty$ & $\eps=8$ & $\eps=4$ & $\eps=2$ & $\eps=1$ & $\eps=0.1$ \\
\midrule
LP-1ST & 0.130 & 0.130 & 0.136 & 0.206 & 0.362 & 0.653 \\
LP-1ST (domain prior) & 0.130 & 0.130 & 0.136 & - & - & - \\
LP-1ST (noise correction) & 0.130 & 0.130 & 0.131 & 0.156 & 0.171 & 0.645 \\
LP-2ST & 0.130 & 0.130 & 0.123 & 0.207 & 0.342 & 0.527 \\
PATE & 0.130 & 0.151 & 0.156 & 0.188 & 0.255 & 0.680 \\
\bottomrule
\end{tabular}
\end{table}

\section{Additional experiments}
\label{app:additional_experiments}
\paragraph{Fixed fraction of adversary guesses.} In our CIFAR-10 and Criteo experiments, we sweep over the fraction of adversarial guesses $c' \in \{1\%, 2\%, \dots, 100\%\}$ and report the highest empirical $\epsilon$. This can overestimate empirical $\epsilon$ as it does not account for multiple hypothesis testing (the hypotheses in this case are highly correlated and thus less likely to lead to false positives). In Table~\ref{tab:label_dp_images_1pct}, we show the results on CIFAR-10, where we fix the fraction of the adversary's guesses to $c'=1\%$ of samples. Compared to Table~\ref{tab:label_dp_images} where we sweep $c' \in \{1\%, 2\%, \dots, 100\%\}$, we observe slightly smaller $\epsilon$ values. 

\begin{table}
\centering
\caption{CIFAR-10. Auditing Label DP algorithms under different $\eps$ with $\delta=10^{-5}$ and a fixed fraction of $1\%$ adversarial guesses.}
\label{tab:label_dp_images_1pct}
\begin{tabular}{lccc}
\toprule
Label DP Algorithm & $\eps = \infty$ & $\eps = 10.0$ & $\eps = 1.0$ \\
\midrule
LP-1ST & 1.98 $\pm$ 0.21 & 1.99 $\pm$ 0.32 & 0.41 $\pm$ 0.03 \\
LP-1ST (out-of-domain prior) & 2.08 $\pm$ 0.21 & 1.78 $\pm$ 0.23 & 0.81 $\pm$ 0.05 \\
PATE-FM & 2.26 $\pm$ 0.31 & 2.16 $\pm$ 0.22 & 0.74 $\pm$ 0.08 \\
ALIBI & 2.43 $\pm$ 0.31 & 2.1 $\pm$ 0.26 & 0.65 $\pm$ 0.05 \\
\bottomrule
\end{tabular}
\end{table}

\paragraph{Comparison to additional prior MIA methods.} In Fig.~\ref{fig:cifar10_rmia}, we report the performance of the Robust MIA (RMIA) approach from \citet{zarifzadeh2024low}. This attack is designed to have high power when training very few additional models for the attack. We run the offline mode of RMIA with the $M'$ model as the single shadow (reference) model for the non-members data distribution. We observe that RMIA slightly outperforms both the observational LIA and the \emph{difficulty calibration} MIA~\citep{WatsonGCS22}, particularly for large theoretical $\epsilon$ regimes.    

\begin{figure}
    \centering
\includegraphics[width=0.7\linewidth]{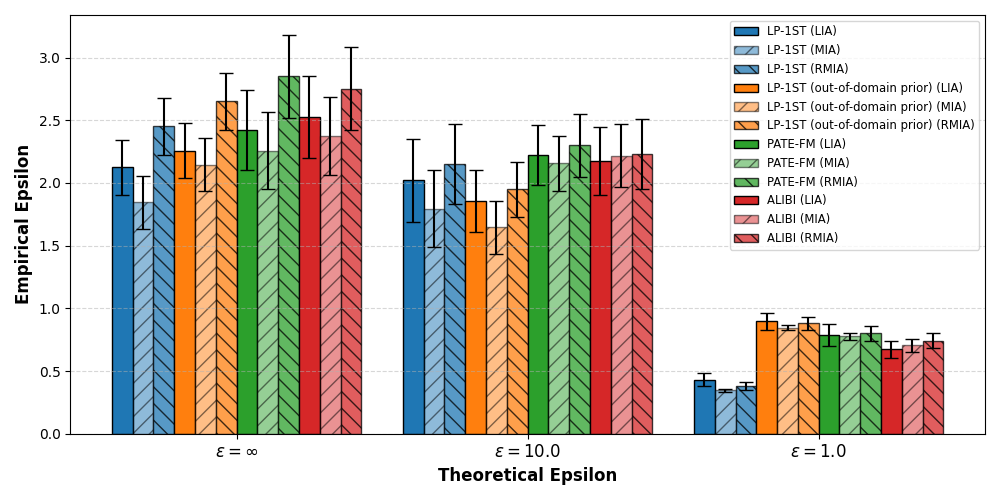}
    \caption{Comparison with calibration-based MIA~\cite{WatsonGCS22} and RMIA~\cite{zarifzadeh2024low} for different Label DP Algorithms on CIFAR-10 dataset.
The error bar represents the standard deviation across 100 different attack repetitions.}
    \label{fig:cifar10_rmia}
\end{figure}

\paragraph{Additional dataset.} We use the \emph{Twitter Sentiment} dataset~\cite{senttwitter} with 3 million tweets. We remove the small number of neutral tweets from the dataset and keep the (balanced) positive and negative labeled tweets. We keep a random $10\%$ portion of the dataset as test set and  randomly split the remaining data into two. We train the reference $M'$ on the first half, and the target model on the other half. We vectorized the tweets using TF-IDF features and train standard Fully Connected Neural Networks to achieve $80\%$ accuracy on the test set. We use $300$K canaries for LIA from the training set and the same number of canaries from the test set for the MIA experiments. In Table~\ref{tab:twitter} we report the auditing results on the non-private model (theoretical $\epsilon=\infty$).

\begin{table}
\centering
\caption{Auditing a model trained on the Twitter sentiment analysis dataset (with $\epsilon = \infty$). }
\label{tab:twitter}
\begin{tabular}{lc}
\toprule
Attack & Empirical $\epsilon$ ($\pm$ standard deviation) \\
\midrule
Observational LIA (ours) &  2.34 $\pm$ 0.25 \\
Calibration-based MIA \citep{WatsonGCS22} & 0.72 $\pm$ 0.15 \\
RMIA \citep{zarifzadeh2024low} & 2.52 $\pm$ 0.32 \\
\bottomrule
\end{tabular}
\end{table}

\end{document}